\documentclass{article}

\usepackage{PRIMEarxiv}

\usepackage[utf8]{inputenc} 
\usepackage[T1]{fontenc}    
\usepackage{hyperref}       
\usepackage{url}            
\usepackage{booktabs}       
\usepackage{amsfonts}       
\usepackage{nicefrac}       
\usepackage{microtype}      
\usepackage{lipsum}
\usepackage{fancyhdr}       
\usepackage{graphicx}       
\graphicspath{{media/}}     


\usepackage{amsmath,amsfonts,bm}

















\def\1{\bm{1}}










\DeclareMathAlphabet{\mathsfit}{\encodingdefault}{\sfdefault}{m}{sl}
\SetMathAlphabet{\mathsfit}{bold}{\encodingdefault}{\sfdefault}{bx}{n}













\DeclareMathOperator*{\argmax}{arg\,max}

\usepackage{xcolor}         
\usepackage{bm}
\usepackage{times}
\usepackage{soul}
\usepackage{amsmath,mathtools,amssymb}
\usepackage{amsthm}
\usepackage{natbib}

\pagestyle{fancy}
\thispagestyle{empty}
\rhead{ \textit{ }} 

\fancyhead[LO]{Multi-Agent Stochastic Bandits Robust to Adversarial Corruptions}

\title{A template for Arxiv Style
\thanks{\textit{\underline{Citation}}: 
\textbf{Authors. Title. Pages.... DOI:000000/11111.}} 
}

\author{
  Author1, Author2 \\
  Affiliation \\
  Univ \\
  City\\
  \texttt{\{Author1, Author2\}email@email} \\
   \And
  Author3 \\
  Affiliation \\
  Univ \\
  City\\
  \texttt{email@email} \\
}

\makeatletter
\newtheorem*{rep@theorem}{\rep@title}
\newcommand{\newreptheorem}[2]{%
\newenvironment{rep#1}[1]{%
 \def\rep@title{#2 \ref{##1}}%
 \begin{rep@theorem}}%
 {\end{rep@theorem}}}
\makeatother

\newtheorem{theorem}{Theorem}
\newtheorem{lemma}{Lemma}
\newtheorem{claim}{Claim}
\newtheorem{corollary}{Corollary}
\newtheorem{remark}{Remark}

\newreptheorem{theorem}{Theorem}
\newreptheorem{lemma}{Lemma}


\usepackage{algorithm}
\usepackage{algpseudocode}
\let\oldComment=\Comment
\renewcommand{\Comment}[1]{\oldComment{\texttt{#1}}}
\algnewcommand{\LeftComment}[1]{\Statex $\triangleright$ \texttt{#1}}
\algnewcommand{\RightComment}[1]{\Statex \leavevmode\hfill$\triangleright$ \texttt{#1}}
\algnewcommand\algorithmicinput{\textbf{Input:}}
\algnewcommand\Input{\item[\algorithmicinput]}%
\algnewcommand\algorithmicoutput{\textbf{Output:}}
\algnewcommand\Output{\item[\algorithmicoutput]}%
\algnewcommand\algorithmicinitial{\textbf{Initialize:}}
\algnewcommand\Initial{\item[\algorithmicinitial]}%

\usepackage{subcaption}
\usepackage{bbm}

\definecolor{mydarkblue}{rgb}{0,0.08,0.45}
\hypersetup{ %
    pdftitle={},
    pdfsubject={},
    pdfkeywords={},
    pdfborder=0 0 0,
    pdfpagemode=UseNone,
    colorlinks=true,
    linkcolor=mydarkblue,
    citecolor=mydarkblue,
    filecolor=mydarkblue,
    urlcolor=mydarkblue,
}
\usepackage[colorinlistoftodos,prependcaption,textsize=tiny,textwidth=1cm,color=green]{todonotes}

\newcommand{\compilehidecomments}{false}
\ifthenelse{ \equal{\compilehidecomments}{true} }{%
    \newcommand{\jz}[1]{}
    \newcommand{\xc}[1]{}
    \newcommand{\mo}[1]{}
    \newcommand{\fg}[1]{}
    \newcommand{\rev}[1]{}
}
{
    \newcommand{\jz}[1]{{\color{orange} [\text{JZ:} #1]}}
    
    \newcommand{\mo}[1]{{\color{blue} [\text{MH:} #1]}}
    \newcommand{\fg}[1]{{\color{purple} [\text{FG:} #1]}}
    \newcommand{\rev}[1]{{\color{red}[#1]}}
}

\usepackage{xspace}

\newcommand{\DRAA}{\texttt{DRAA}\xspace}

\newcommand{\barbar}{\texttt{BARBAR}\xspace}
\newcommand{\MAB}{\texttt{MAB}\xspace}
\newcommand{\MAtB}{\texttt{MA2B}\xspace}
\newcommand{\Reg}{\text{Reg}\xspace}
\usepackage{letltxmacro}

\usepackage{fontawesome5}

\DeclarePairedDelimiter\abs{\lvert}{\rvert}

\renewcommand{\ge}{\geqslant}
\renewcommand{\le}{\leqslant}
\renewcommand{\geq}{\geqslant}

\title{Multi-Agent Stochastic Bandits \\ Robust to Adversarial Corruptions}


\author{Fatemeh Ghaffari\\
CICS, UMass Amherst\\
Amherst, MA 01003\\
\texttt{fghaffari@cs.umass.edu} 
\And
Xuchuang Wang\\
CICS, UMass Amherst\\
Amherst, MA 01003\\
\texttt{xuchuangwang@cs.umass.edu} 
\And
Jinhang Zuo \\
CS, CityU \\
Kowloon, Hong Kong\\
\texttt{jinhangzuo@gmail.com}
\And
Mohammad Hajiesmaili \\
CICS, UMass Amherst\\
Amherst, MA 01003\\
\texttt{hajiesmaili@cs.umass.edu}
}


%


\begin{document}

\maketitle

\begin{abstract}
    We study the problem of multi-agent multi-armed bandits with adversarial corruption in a heterogeneous setting, where each agent accesses a subset of arms. The adversary can corrupt the reward observations for all agents. Agents share these corrupted rewards with each other, and the objective is to maximize the cumulative total reward of all agents (and not be misled by the adversary).
    We propose a multi-agent cooperative learning algorithm that is robust to adversarial corruptions.
    For this newly devised algorithm, we demonstrate that an adversary with an unknown corruption budget \(C\) only incurs an additive \(O((L / L_{\min}) C)\) term to the standard regret of the model in non-corruption settings, where \(L\) is the total number of agents, and \(L_{\min}\) is the minimum number of agents with mutual access to an arm.
    As a side-product,
    our algorithm also improves the state-of-the-art regret bounds when reducing to both the single-agent and homogeneous multi-agent scenarios, tightening multiplicative \(K\) (the number of arms) and \(L\) (the number of agents) factors, respectively.
    
\end{abstract}

\section{Introduction}\label{sec:1}


Motivated by its broad applicability in large-scale learning systems, the multi-agent multi-armed bandits (\texttt{MA2B}) problem has recently been studied in various settings, e.g., cooperative \texttt{MA2B} \citep{vial2021robust,liu2021cooperative,hillel2013distributed,yang2021cooperative}, competitive \texttt{MA2B} \citep{liu2010distributed,wang2020optimal,mehrabian2020practical}, and \texttt{MA2B} with collision~\citep{liu2010distributed,wang2020optimal,shi2021heterogeneous}, etc. 
Each agent can access arms in this multi-agent scenario, solving an \texttt{MAB} problem. 
The goal for each agent is to maximize the cumulative reward, or in other words, to minimize the aggregated regret, which is the difference between the cumulative rewards of the optimal arm and the agent's choices. Agents also aim to minimize their communication cost. Previous works typically assume a homogeneous setting, where all agents can access the entire set of arms and solve the same \texttt{MAB} instance.


In many real-world applications, these agents operate in varying but possibly overlapping contexts and face adversarial corruption. Examples include recommender systems~\citep{zhou2019privacy,immorlica2019bayesian,silva2022multi}, online advertising~\citep{chu2011contextual}, and shortest path routing~\citep{zou2014online,talebi2017stochastic}. In recommender systems, for example, products (arms) have different ratings (rewards),
and platform operators (agents) are associated with distinct geographical jurisdictions, where the set of available products, as well as the rules governing their sale may differ. This results in each agent having access to a different set of arms, possibly overlapping with other agents' arm-sets. Each agent aims to maximize user satisfaction by recommending products within its jurisdiction. An agent ``pulls'' an arm by recommending a product to a user and then observes the reward as the user's rating. However, fake ratings and reviews (corruption) may be introduced by merchants to manipulate their product's ratings or degrade competitors' products. Agents are unaware of the presence or level of this corruption. While they benefit from sharing information, privacy requirements necessitate that they limit communication frequency. The objective, therefore, is to develop an algorithm that minimizes the impact of adversarial corruption with a low communication cost.

In this paper, we formally model these realistic problems as heterogeneous \texttt{M2AB} with adversarial corruptions, where \(L\in\mathbb{N}^+\) heterogeneous agents cooperatively play a stochastic \texttt{MAB} game with \(K\in\mathbb{N}^+\) arms under corruptions.
Each arm \(k\) is associated with a reward distribution. 
Each agent \(\ell\) can access a subset of arms, called the \emph{local arm-set}.
In each decision round, each agent picks one arm from its local arm-set to pull and observes a reward sample drawn from the pulled arm, which an adversary possibly corrupts. 
Similar to the regular \texttt{MAB} setting, each agent aims to maximize the reward over a horizon of \(T\in\mathbb{N}^+\) rounds. 
We assume that all agents can communicate with each other in a fully connected setting to share information and expedite the learning process. The formal definition of the problem setting is provided in Section~\ref{sec:2}.

\paragraph{Technique challenge overview.}
We aim to study cooperative algorithms for the heterogeneous \texttt{MA2B} robust to adversarial corruptions.
There is limited prior research on multi-agent online learning under corruptions, and most devoted to homogeneous settings~~\citep{vial2022robust,mitra2021robust,mahesh2022multi}, and heterogeneous multi-agent scenario~\citep{yang2022distributed,wang2023explore} of more practical interest is not explored. Especially as the robustness requires each agent to utilize individual randomness in their decisions, agents would have uncoordinated actions, which is different from the standard cooperative bandit algorithms~\citep{wang2023achieve,wang2020optimal} where agents often coordinate with each other to pull arms.
Besides the cooperation challenge of multiple agents due to the randomization,
agents' heterogeneity constrains them from pulling only their local arms, which may overlap with those of other agents. This overlap complicates cooperation since each agent's local optimal arm and corresponding pull probability may differ.

In this paper, we propose a fully distributed algorithm for the heterogenous \texttt{MA2B} that is agnostic to the corruption level. The algorithm is epoch-based with doubling epoch lengths, where each agent's arm pull probabilities are set at the start of the epoch and remain constant throughout. Since agents communicate only at the end of each epoch, the communication cost is logarithmic. Before each epoch, the local arm-set is divided into active and bad arms, allocating fewer resources to the bad arms. At the end of the epoch, each agent calculates the empirical mean for each arm based on both observed and shared rewards from other agents. To address the differences in local arm-set sizes, we apply a weighted averaging technique to mitigate the effects of heterogeneity. Our algorithm improves upon the algorithm \barbar presented in \cite{gupta2019better} and the algorithm proposed by \cite{liu2021cooperative} in the special cases of single-agent \MAB and homogeneous \MAtB, removing the \(K\) and \(L\) factors from the corruption term in the regret upper bound and matching the lower bound of \(O(C)\) established in \cite{gupta2019better} and \cite{lykouris2018stochastic} for the single-agent case, and in~\cite{liu2021cooperative} for the homogeneous multi-agent case.

\subsection{Contributions}


\paragraph{Algorithm design (Section~\ref{sec:3})}
We introduce the \DRAA algorithm, a fully distributed, robust approach for heterogeneous \texttt{MA2B} that is agnostic to the corruption level, where agents make independent decisions while cooperating. 
This is the first work to address stochastic \MAtB with adversarial corruption in a heterogeneous setting. We introduce new algorithmic techniques to address the challenges posed by heterogeneity.
The algorithm incorporates a set-splitting technique to classify arms as active or bad, allocating some resources to bad arms. We also apply a weighted averaging method to estimate the empirical mean for each arm to properly interpret information receive from other agents. 
Our theoretical results demonstrate that, in the special case of single-agent \MAB, our algorithm improves upon the results of \barbar. In the case of homogeneous \MAtB, it surpasses the regret upper bound established by \cite{liu2021cooperative}. 

\paragraph{Theoretical analyses (Section~\ref{sec:4})} We show that \DRAA achieves a regret of \(O((L / L_{\min}) C) + \tilde{O}\left({\log (T)}K / {\Delta_{\min}}\right)\), where \(L_{\min}\) is the minimum number of agents with mutual access to the same arm, \(\Delta_{\min}\) is the minimum non-zero gap between the expected reward of any arm \(k\) and any local best arm, and \(C\) is the total corruption level. We also reduce the algorithm to single-agent \MAB, achieving a regret bound of \(O(C) + \tilde{O}\left({\log (T)}K / {\Delta_{\min}}\right)\). This improves upon \barbar by eliminating the \(K\) factor from the corruption term. Similarly, by reducing our setting to a homogeneous \MAtB, we achieve a regret bound of \(O(C) + \tilde{O}\left({\log (T)}K / {\Delta_{\min}}\right)\),
thus removing the \(L\) term from the corruption term compared to \cite{liu2021cooperative}. Our algorithm also achieves a near-optimal regret bound in the absence of corruption, while maintaining a logarithmic communication complexity. 
\subsection{Related Work}

\paragraph{Single-agent \MAB} The \texttt{MAB} problem is typically studied in two different settings: stochastic or adversarial. In the stochastic setting, realized rewards are sampled from reward distributions. In the adversarial setting, an adversary manipulates the reward either at the beginning of the game or at every pull. However, real-life situations often do not neatly fit into one of these two categories. 
For instance, as explained in the example, in a recommender system observed rewards are product ratings, some of which may be fake, while the others are genuine~\citep{kapoor2019corruption}. In online shortest path routing for large networks, the distribution of links can be stochastic, adversarial, or a combination of both~\citep{zhou2019toward,yang2020adversarial}. Therefore, it is more practical to consider a combination of these settings, where adjusting a parameter can transition the setting between stochastic, adversarial, or some partial state in between. Towards this end, we consider stochastic \texttt{MA2B} in the presence of adversarial corruption~\citep{lykouris2018stochastic,gupta2019better,zhao2021linear,wei2022model,bogunovic2021stochastic}. In this scenario, rewards are first generated from underlying distributions, these stochastic rewards are then subject to an adversary's corruption, and only the corrupted rewards are observed by the agents. Additionally, agents have no information about the extent of corruption in the observed reward. Further details on corruption are provided in Section~\ref{sec:2}.

\paragraph{Multi-Agent \texttt{MA2B}}
Among the vast amount of \texttt{MA2B} literature~\citep{chawla2023collaborative,hillel2013distributed,immorlica2019bayesian,liu2010distributed,liu2021cooperative,mehrabian2020practical,shi2021heterogeneous,vial2021robust,vial2022robust,wang2020optimal}, the work of \cite{yang2022distributed}---which first introduces the heterogeneous \texttt{MA2B} model---is the most related to ours.
\cite{yang2022distributed} extend the Upper Confidence Bound (\texttt{UCB}) and the Active Arm Elimination (\texttt{AAE})~\citep{even2006action} algorithms to a heterogeneous setting. Their Cooperative UCB (\texttt{CO-UCB}) algorithm achieves a regret of \(O(\sum_{k: \Tilde{\Delta}_k<0}\log{T}/\Tilde{\Delta}_k)\), where \(\Tilde{\Delta}_k\) is defined as the minimum gap between the mean reward of arm \(k\) and the local optimal arm in the local arm-sets of agents which have access to \(k\).  
In the heterogeneous \texttt{MA2B} problem \citep{yang2022distributed,chawla2023collaborative}, each agent \(\ell\) has access to a subset of arms. This subset is referred to as the local arm-set for agent \(\ell\). Similar to the regular \texttt{MAB} setting, each agent aims to maximize the reward within its own local arm-set. We assume that all agents can communicate with each other in a fully connected setting to share information and expedite the learning process. The formal definition of the problem setting is provided in Section~\ref{sec:2}.

\paragraph{\texttt{MAB} with corruption} The notion of corruption was first introduced in \cite{lykouris2018stochastic}. Their proposed algorithm, Multi-Layer Active Arm Elimination Race, exhibits a regret of \(\Tilde O(\sum_{i \neq i^*} C/\Delta_k)\), where \(C\) is the corruption level and \(\Delta_k\) represents the gap between the mean reward of arm \(k\) and the optimal arm. 
This regret bound is subsequently enhanced by \cite{gupta2019better}. Their algorithm, Bandit Algorithm with Robustness: Bad Arms Get Resource (\barbar), achieves a regret of \(O(KC) + \Tilde{O}(\sum_{k \neq k^*}1/\Delta_k)\), introducing an additive term accounting for corruption. \barbar improves upon the Active Arm Elimination (\texttt{AAE}) algorithm \cite{even2006action} where instead of permanently eliminating suboptimal arms from the active arm-set, they are pulled with reduced probability. \cite{zimmert2021tsallis} have also proposed a best-of-both-worlds algorithm that can also be applied to address the bandits with corruption setting.
The corruption setting has also been explored in linear contextual bandit settings by \cite{zhao2021linear,wei2022model,bogunovic2021stochastic}. All the aforementioned works analyze corruption within a single-agent setting. Extending any of these robust algorithms to a heterogeneous \texttt{MA2B} setting presents unique challenges due to differences in the local arm-sets and optimal arms among agents. We discuss these challenges in more detail in Section~\ref{sec:4}.

\paragraph{Corruption in the \texttt{MA2B} setting}
The first study to introduce the concept of corruption in a \texttt{MA2B} setting is by \cite{liu2021cooperative}. They extend the corruption level definition from \cite{lykouris2018stochastic,gupta2019better} to a homogeneous multi-agent setting, utilizing the same total corruption definition formally outlined in \eqref{eq:corr}. The authors achieve a high probability regret bound of \(O(LC) + \Tilde{O}(K/\Delta_{\text{min}})\). The key distinction between their work and ours lies in their consideration of a homogeneous \texttt{MA2B} setting. They extend the \barbar algorithm from \cite{gupta2019better} to a leader-follower algorithm, where the leader agent allocates subsets of arms to the follower agents. Subsequently, each follower pulls arms from their assigned set based on a probability calculated from the agent reward gap. This ensures that agents with higher observed rewards are more likely to be pulled again. Extending this algorithm to a heterogeneous setting introduces several technical challenges, particularly in arm assignment based on each agent's local arm-set.
Lastly, we distinguish our work from another line of research, as exemplified in \cite{vial2021robust,vial2022robust}, which aims to develop robust algorithms in the presence of malicious agents rather than corrupted reward observations in our setting.

\section{Problem Setting}\label{sec:2}

\paragraph{Multi-Agent multi-armed bandit (\texttt{MA2B}) model} 
A \texttt{MA2B} model consists of \(L\in\mathbb{N}^+\) agents in set $\mathcal{L} \coloneqq \{1, 2, \dots, L\}$ and \(K\in\mathbb{N}^+\) arms in set $\mathcal{K} \coloneqq \{1, 2, \dots, K\}$, where each agent \(\ell\in\mathcal{L}\) has access to a subset of arms $\mathcal{K}_{\ell} \subseteq \mathcal{K}$ with size $K_{\ell} \coloneqq\lvert{\mathcal{K}_\ell}\rvert$, which can be \emph{heterogeneous}. Each arm $k \in \mathcal{K}$ is associated with an i.i.d. random reward bounded in $[0,1]$ with reward mean $\mu_k$. We will refer to the distribution of arm \(k\) as \(\mathcal{D}(\mu_k)\). The local best arm for agent \(\ell\), denoted by $k_{\ell}^*$, is the arm with the highest reward mean in $\mathcal{K}_{\ell}$, i.e., 
\(k_\ell^* \coloneqq \argmax_{k \in \mathcal{K}_{\ell}} \mu_{K}\). For any other arm $k \in \mathcal{K}_{\ell}\setminus \{k_{\ell}^*\}$, we define the local reward gap as $\Delta_{k, \ell} \coloneqq \mu_{k_{\ell}^*} - \mu_k$. We also define \(\mathcal{L}_k \subseteq  \mathcal{L}\) as the set of agents with access to arm \(k\), i.e., \(\mathcal{L}_k \coloneqq \{\ell \in \mathcal{L}: k\in\mathcal{K}_\ell\}\), and we set $L_k$ as the size of $\mathcal{L}_k$.
Denote \(T\in\mathbb{N}^+\) as the total number of decision rounds, and set \(\mathcal{T}\coloneqq \{1,2,\dots,T\}\).

\paragraph{Corruption mechanism} 
In each round \(t\), stochastic rewards \(r_{k, \ell}^t\) are drawn for each agent \(\ell\) and arm \(k\) from the corresponding reward distribution of the arm (Line~\ref{line:draw_rewards}). An adversary then observes these realized rewards as well as the historical actions and rewards of each agent (Line~\ref{line:adv_observe}). Based on this information, the adversary returns corrupted rewards \(\Tilde{r}_{k, \ell}^t \in [0, 1]\) for each arm \(k\) across all agents \(\ell\) (Line~\ref{line:corruption}). Each agent \(\ell\) then selects an arm \(k_{\ell}^t\) to pull and observes only the corrupted reward \(\Tilde{r}_{k_{\ell}^t, \ell}^t\) (Line~\ref{line:arm_pull_proc}). The corrupted reward observation process is presented in Procedure~\ref{proc:corruption}.

\floatname{algorithm}{Procedure}
\begin{algorithm}[tp]
    \caption{Reward generation and corruption procedure}\label{proc:corruption}
    \begin{algorithmic}[1]
    \For {\(t = 1, 2, \dots, T\)}
        \State Draw stochastic rewards \(r_{k, \ell}^t \sim \mathcal{D}(\mu_k)\) for all \(\ell \in \mathcal{L}\) and \(k \in \mathcal{K}_\ell\).
        \label{line:draw_rewards}
        \State Adversary observes realized rewards \(r_{k, \ell}^t\), 
        \Statex
        \hfill as well as the rewards and actions of each agent in previous rounds. 
        \label{line:adv_observe}
        \State Adversary returns the corrupted reward \(\Tilde{{r}}_{k, \ell}^t \in [0, 1]\) for all arms \(k\) and all agents \(\ell\).
        \label{line:corruption}
        \State Each agent \(\ell\) pulls an arm \(k_{\ell}^t\), and observes the corrupted reward \(\Tilde{{r}}_{k_{\ell}^t, \ell}^t\).
        \label{line:arm_pull_proc}
    \EndFor
    \end{algorithmic}
\end{algorithm}
\floatname{algorithm}{Algorithm}
The total corruption level \(C\) across all agents is commonly defined as the summation of all agents' individual corruption level \(C_\ell\) as follows,
\[
\label{eq:corr}
    C \coloneqq \sum_{\ell=1}^L C_{\ell}, \text{ where }
    C_{\ell} \coloneqq \sum_{t = 1}^T \left\lVert \tilde{r}_{\ell}^t - r_{\ell}^t\right\rVert_{\infty}.
\]
Here, the infinity norm for individual corruption \(C_\ell\) is employed to compute the corruption level, capturing the worst-case scenario where agents consistently pull the arm with the highest corruption in each round. This infinity norm aligns with our objective of designing algorithms that are agnostic to the corruption level, and it is a standard definition in prior literature of bandits with corruptions, e.g.,~\citet{gupta2019better,lykouris2018stochastic}.


\paragraph{Multi-Agent communication}
Agents cooperate via gathering information from each other, which incurs communication costs~\citep{wang2022achieving, yang2022distributed}. 
For simplicity, we assume broadcast communications, and the total cost is defined as follows,
\[
\text{Comm}(T) \coloneqq \mathbb{E}\left[
\sum_{\ell \in \mathcal{L}}\sum_{t \in\mathcal{T}}\mathbbm{1}\left\{\text{Agent \(\ell\) broadcasts a message in \(t\)}\right\}\right].
\]
In a distributed scenario, all the agents follow the same communication scheme, and messages typically include the agent's pulled arm indices and observed rewards from the previous rounds.

\paragraph{Regret objective} Each agent seeks to maximize its cumulative reward by frequently pulling its locally optimal arm. This objective is equivalent to minimizing regret, a standard metric in \texttt{MAB} problems. Regret for each agent \(\ell\in\mathcal{L}\) is defined as the difference between the accumulative rewards of the pulled arm by the concerned algorithm and the accumulative rewards of pulling the locally optimal arm over \(T\) rounds as follows,
\begin{equation}
    \Reg_{\ell}(T) \coloneqq
    \mathbb{E} \left[ \sum_{t = 1}^T (\mu_{k^*_{\ell}} - \mu_{k_{\ell}^t}) \right],
\end{equation}
where $k^*_{\ell}$ is the local optimal arm for agent \(\ell\), \(\mu_{k_{\ell}^t}\) is the mean reward of the arm played by agent \(\ell\) in round \(t\), and the expectation is taken over the randomness of the learning algorithm.
The total regret of all agents is then defined as the sum of the regrets of all agents: \(\Reg(T) \coloneqq \sum_{\ell} \Reg_{\ell}(T).\)
We aim to bound the total regret in terms of the total corruption level $C$.
\section{Algorithms}\label{sec:3}
In this section, we introduce the Distributed Robust Arm Activation (\DRAA) algorithm to defense against the adversarial corruptions for the heterogeneous \MAtB model.

\paragraph{Main algorithmic idea}
This \DRAA algorithm operates in epochs, each doubling its prior size.  
Each epoch \(m\) is divided into three phases:
active arm-set construction, arm-pulling and communication, and estimate updates. 
In the first phase --- \emph{active arm-set construction} --- each agent \(\ell\) constructs an active arm-set \(\mathcal{A}_\ell^m\), including all ``good'' local arms \(k\in\mathcal{K}_\ell\), and its complement, bad arm-set \(\mathcal{B}_\ell^m = \mathcal{K}_\ell \setminus \mathcal{A}_\ell^m\). 
Then, agents assign the arm pull probabilities to arms according to whether it is in the active arm-set \(\mathcal{A}_\ell^m\) or the bad arm-set \(\mathcal{B}_\ell^m\), respectively.
The second phase --- \emph{arm-pulling and communication} --- involves agents pulling arms according to the assigned probabilities in all rounds of the epoch \(m\), followed by communicating with other agents.
In the final phase of \emph{reward estimation}, each agent estimates the total reward for each arm using its local observed rewards and those received from others, and then calculates each arm's empirical reward gaps, determining the arm pull probabilities for the next epoch.
In Section~\ref{sec:3.1}, we will examine each part of the algorithm in detail.

\subsection{Algorithmic detail}
\label{sec:3.1}
\begin{algorithm}[tp]
    \caption{Distributed Robust Arm Activation (\DRAA) for agent \(\ell\)}
    \label{alg:alg}
    \textbf{Input}: full arm-set \(\mathcal{K}\), the local arm \(\mathcal{K}_\ell\) of all agents, time horizon \(T\), small probability \(\delta\).
    \\
    \textbf{Initialization}: \(\Delta_{k, \ell}^0 \gets 1\), and \(r^m_{k, \ell} \gets 1\) for all \(k \in \mathcal{K}_{\ell}\), and \(\lambda \gets 2^{24} \log{(8KL\log{(T)}/\delta)}\).
    \\
    \vspace{-11pt}
    \begin{algorithmic}[1] 
        \For {epochs \(m \gets 1, 2, \dots, M\)}
        \LeftComment{Phase 1: Active Arm Constructions and Probability Calculation}
        \State \(T^m\gets \lambda K 2^{2(m - 1)} / L_{\min}\).
        \label{line:T_m}
        \State \(\mathcal{A}_\ell^{m + 1} \gets \left\{k\in \mathcal{K}_\ell: r_{\max, \ell}^{m} - r_{k,\ell}^{m} < \frac{1}{2^{m + 3}}\sqrt{\frac{L_{\min}}{L}}-3 \times 2^{-7}\right\}\).
        \label{line:set_A} \Comment{Active arm-set}
        \State \(\mathcal{B}_\ell^{m + 1} \gets \mathcal{K}_\ell - \mathcal{A}_\ell^{m + 1}\). \Comment{Bad arm-set}
        \label{line:set_B}
        \State \(p_{k, \ell}^{m + 1} \gets \begin{cases}
            2^{-2(m + 1)}\frac{(\Delta_{k, \ell}^{m})^{-2}}{\sum_{k' \in \mathcal{K}_\ell}(\Delta_{k'}^{m})^{-2}}\frac{L_{\min}}{L_k}\frac{K_\ell}{K},                                                                              & \text{for arm }k \in \mathcal{B}_\ell^{m + 1}
            \\
            \frac{1}{|\mathcal{A}^{m + 1}_{\ell}|}\left(1 - \sum_{k' \in \mathcal{B}_\ell^{m + 1}}p_{k', \ell}^{m + 1}\right), & \text{for arm }k \in \mathcal{A}_\ell^{m + 1}
        \end{cases}\)
        \label{line:prob}
        \LeftComment{Phase 2: Arm Pulling and Communication}
        \For{\(t = T^{m - 1} + 1, T^{m - 1} + 2, \dots, T^{m}\)}
        \State Pick an arm \(k\) from \(\mathcal{K}_{\ell}\) according to probability \(p_{k, \ell}^m\) to pull.
        \label{line:arm_pull}
        \State Observe reward \(\tilde{r}_{k, \ell}^t\).
        \EndFor
        \State \(\tilde{R}_{k, \ell}^m \gets \sum_{t \in E^m}\tilde{r}_{k, \ell}^t\) for all arm \(k\in\mathcal{K}_\ell\).
        \label{line:sum_rewards}
        \State Broadcast \({\tilde{R}_{k, \ell}^m}\), \({p_{k, \ell}^m}\), and \(\Delta_{k, \ell}^{m - 1}\) for all arm \(k \in \mathcal{K}_{\ell}\) and the set \(\mathcal{A}_\ell^m\) to all agents \(\ell' \in \mathcal{L}\).
        \label{line:send_sum_rewards}
        \State Receive \({\tilde{R}_{k, {\ell'}}^m}\), \({p_{k, {\ell'}}^m}\), and \(\Delta_{k, \ell'}^{m - 1}\) for all arm \(k \in \mathcal K\) and the set \(\mathcal{A}_{\ell'}^m\) from all agents \(\ell' \in \mathcal{L}\).
        \label{line:receive_sum_rewards}
        \LeftComment{Phase 3: Reward Estimation}
        \State \(r_{k, \ell}^m \gets \text{Estimator}(\{\forall \ell' \in \mathcal{L}: p_{k, \ell'}^m\}, \{\forall \ell' \in \mathcal{L}: \tilde{R}_{k, \ell'}^m\}, T^m)\).\Comment{Reward Estimates}
        \label{line:approx_mean_reward}
        \State \(r_{\max, \ell}^m \gets \max_{k \in \mathcal K_{\ell}} r_{k, \ell}^m - \frac{1}{16}\Delta_{k, \ell}^{m - 1}\).
        \label{line:approx_max_reward}
        \State \(\Delta_{k, \ell}^{m} \gets \max{\{2^{-3}, r_{\max, \ell}^m - r_{k, \ell}^m + 3\times2^{-7}\}}\) for all arm \(k \in \mathcal{K}\).
        \label{line:epoch_gap}
        \EndFor
    \end{algorithmic}
\end{algorithm}

\paragraph{Algorithmic technical challenges}
The algorithmic design of \DRAA presents several technical challenges, especially due to the heterogeneous nature of the multi-agent system and the presence of adversarial corruption,
which complicates communication and the interpretation of shared information. Agents need to calculate an accurate estimation of their local arms. However, this estimation can easily be skewed due to the heterogeneous setting and adversarial corruption. Considering that a ``bad'' arm for one agent could be a ``good'' arm for another, estimation under this multi-agent heterogeneous setting requires careful information balancing, e.g., utilizing other agents' information with the right weights.

Another challenge lies in managing corruption while minimizing regret. Agents aim to minimize local regret, which involves pulling their best arms as frequently as possible based on reward estimates. However, adversarial corruption can make these estimates inaccurate. Therefore, traditional \MAB algorithms that eliminate ``bad'' arms to focus on exploiting good ones risk discarding the best arm if done under corruption. To maintain an optimal regret bound without corruption while adapting gracefully to increasing corruption, the algorithm must continue exploring all arms. This involves assigning careful pull probabilities to ``bad'' arms, sufficient to account for potential corruption but low enough to avoid a substantial regret increase.



\paragraph{Active arm-set construction and exploration probability calculation (Line~\ref{line:T_m}-\ref{line:prob})}
\DRAA operates in epochs. The length of the \(m\)-th epoch is set to \(\lambda K 2^{2(m - 1)} / L_{\min}\) uniformly for all agents (Line~\ref{line:T_m}), ensuring synchronization in starting and finishing epochs. This allows agents to share their observations during the epoch and use collective information to update arm pull probabilities for the next one.
Each agent then classifies every arm in their local arm-set \(k \in \mathcal{K}_\ell\) as either active (\(\mathcal{A}_\ell^m\)) or bad (\(\mathcal{B}_\ell^m\)) based on the difference between the empirical mean of arm \(k\) and that of the local best arm from epoch \(m - 1\). Specifically, if \(r^{m - 1}_{\max, \ell} - r^{m - 1}_{k, \ell} \le 2^{-(m + 3)}\sqrt{{L_{\min}}/{L}} - 3 \cdot 2^{-7}\), arm \(k\) is placed in \(\mathcal{A}_\ell^m\) (Line~\ref{line:set_A}); otherwise, it is placed in \(\mathcal{B}_\ell^m\) (Line~\ref{line:set_B}).

Arm pull probabilities differ for active arms \(k \in \mathcal{A}_\ell^m\), and bad arms \(k \in \mathcal{B}_\ell^m\). The goal is to assign a small but non-zero probability to bad arms to maintain good regret performance while still collecting observations in case the reward estimates are corrupted (Line~\ref{line:prob}). For bad arms \(k \in \mathcal{B}_\ell^m\), the pull probability is proportional to the inverse of the empirical reward gap squared, \((\Delta_{k, \ell}^{m - 1})^{-2}\), normalized by the sum of such value for all arms in the agent's local arm-set. This ensures that arms with larger gaps are pulled less frequently. Using this normalization however, agents with smaller arm-sets are likely to pull an equally bad arm more frequently, so this probability is scaled by \(K_{\ell}/K\) to balance this effect. To prevent the number of bad arm pulls from growing too large as epochs length increases exponentially, the probability is further scaled by \(2^{-2m}\). It is also adjusted by \({{L_{\min}}/{L_{k}}}\), distributing the probability among all agents with access to the arm \(k\). The remaining probability is evenly distributed among active arms in \(\mathcal{A}_\ell^m\) (Line~\ref{line:prob}).

\paragraph{Arm-pulling and communication (Line~\ref{line:arm_pull}-\ref{line:receive_sum_rewards})}
In each round of epoch \(m\), agent \(\ell\) pulls arm \(k\) with probability \(p_{k, \ell}^m\) (Line~\ref{line:arm_pull}), and observes the reward. After the end of the epoch, the agent broadcasts to all other agents the sum of observed rewards \(\tilde{R}^m_{k, \ell}\), the arm pull probabilities \(p_{k, \ell}^m\), and the estimated local reward gaps of the previous epoch \(\Delta_{k, \ell}^{m - 1}\) for each arm \(k \in \mathcal{K}_\ell\), along with the set \(\mathcal{A}_\ell^m\) (Lines~\ref{line:send_sum_rewards} and \ref{line:receive_sum_rewards}).

\paragraph{Reward estimation (Line~\ref{line:approx_mean_reward}-\ref{line:epoch_gap})}
Minimizing total regret requires each agent to pull its local good arms as frequently as possible. To do this, agents must accurately estimate the empirical means of their local arms. However, empirical mean estimation is challenging due to (a) the heterogeneous multi-agent setting and (b) adversarial corruption. Different agents may have different sets of arms, leading to varying arm pull probabilities. The quality of an arm for an agent depends heavily on its local arm-set and is assessed relative to the agent’s local best arm. As a result, the estimated reward gap for the same arm \(k\) may differ between agents, causing them to assign different pull probabilities to the same arm. To address this challenge, we design a weighted estimator as follows
\begin{align}
\label{eq:weight_est}
r_{k, \ell}^m \coloneq \frac{\sum_{\ell' \in \mathcal{L}_k}{(p_{k, \ell'}^m)^{-1}}{\tilde{R}_{k, \ell}^m}}{L_kT^m}.
\end{align}
This estimator integrates observations from heterogeneous agents while minimizing bias by normalizing the sum of rewards observed by any agent \(\ell' \in \mathcal{L}_k\) using the inverse of that agent's original arm pull probability. We also test a naive estimator that simply averages all rewards for arm \(k\) across agents accessing it (\(\ell' \in \mathcal{L}_k\)), dividing by the expected value of the total number of pulls for arm \(k\) across all agents. The naive estimator is as follows
\begin{align}
\label{eq:naive_est}
r_{k, \ell}^m \coloneq \frac{\sum_{\ell' \in \mathcal{L}_k}{\tilde{R}_{k, \ell'}^m}}{\sum_{\ell' \in \mathcal{L}_k}{\tilde{p}_{k, \ell'}^m}T^m}.
\end{align}
We then compare both estimators theoretically Section~\ref{sec:4} and demonstrate that the weighted estimator achieves an optimal regret upper bound.

Then, the local maximum reward is computed as \(\max_{k \in \mathcal{K}_\ell} \left( r_{k, \ell}^m - \frac{1}{16} \Delta_{k, \ell}^{m - 1} \right)\) (Line~\ref{line:approx_max_reward}). Subtracting a fraction of the previous epoch's reward gap accounts for the arm's performance history and helps mitigate sudden changes in rewards due to corruption.
Each agent also calculates the empirical reward gap for the epoch as \(\max\{2^{-3}, r_{\max, \ell}^m - r_{k, \ell}^m\}\) for all arms \(k \in \mathcal{K}_\ell\) (Line~\ref{line:epoch_gap}). A minimum value is imposed on the reward gap to prevent any arm from being assigned an excessively high pull probability. 


\paragraph{Comparison to the \barbar and \cite{liu2021cooperative}}
The \barbar algorithm operates in doubling epochs, estimating reward gaps at the end of each epoch and setting arm pull probabilities for the next epoch based on inverse-gap weighting. In contrast, our algorithm classifies arms into ``active'' and ``bad'' sets based on the difference between each arm's estimated mean and the highest estimated mean among local arms. This classification ensures that probabilities assigned to bad arms are inversely related to the total number of arms. The remaining probability is distributed among good arms, making their pulling probability inversely related to the number of good arms.
Our technique effectively improves a multiplicative \(K\) factor on the corruption \(C\) (even in the single-agent setting, detailed in Section~\ref{sec:4}), resolving an open problem mentioned in~\citep{gupta2019better}. 
Furthermore, in \barbar, epoch lengths are random variables determined by the arm-pulling probabilities. 
This randomness complicates extending the algorithm to distributed settings. 
However, when it comes to our heterogeneous arm-sets scenario, epoch lengths would vary across agents,
preventing synchronized epochs and the effective use of shared observations. To address this, we set each epoch length as a static value to ensure all agents are synchronized.

\citet{liu2021cooperative}'s algorithm does not account for the number of agents accessing each arm when assigning pull probabilities. This can lead to higher pull probabilities for bad arms accessible to multiple agents. In contrast, our algorithm divides the epoch length by \(L_{\min}\), the minimum number of agents with access to the same arm, and assigns pull probabilities for bad arms inversely proportional to the number of agents accessing those arms. 
This adjustment removes the factor of \(L\) from the corruption term in a homogeneous setting, addressing a key open problem in \cite{liu2021cooperative}.
For another thing, \cite{liu2021cooperative}'s algorithm follows a leader-follower structure, which differs from our fully distributed scheme. 
In their leader-follower setup, each agent only explores a subset of arms, and all of their observations are uploaded to a center leader.
The leader-follower framework in \cite{liu2021cooperative} aims to balance arm exploration among agents but poses challenges for extending the algorithm to heterogeneous settings. Additionally, in their algorithm, the leader agent calculates error levels globally based on information from all agents, which becomes impractical in heterogeneous contexts. In contrast, \DRAA is fully distributed, with reward gaps estimated locally. To integrate observations across agents, we use a carefully designed weighted estimator, enabling efficient use of communicated data in a local setting.

\section{Theoretical Results}\label{sec:4}

This section presents the theoretical regret upper bounds of \DRAA. We first present the most general regret bound in the heterogeneous setting for \DRAA using our weighted estimator (Eq.~\eqref{eq:weight_est}), and the naive estimator (Eq.~\eqref{eq:naive_est}) to demonstrate the effect of our carefully devised estimator on the performance of the algorithm. 
Then, we reduce the bound to two popular special cases to illustrate the tightness of our results. 
Specifically, in Theorem~\ref{theo:1}, we show that the regret upper bound of \DRAA using the weighted estimator introduced in Eq.~\ref{eq:weight_est} is optimal in the fully stochastic, no-corruption case, up to a \(\log(\log T)\) factor, with an additive term of \(O\left(({L}/{L_{\min}})C\right)\), where \(L_{\min}\) is the minimum number of agents with mutual access to the same arm. 
In Remark~\ref{remark:1}, we also demonstrate that using the weighted estimator can improve the corruption term by a \(L_{\min}\) factor.
Additionally, Corollaries~\ref{cor:1} and \ref{cor:2} demonstrate that in the special cases of single-agent \MAB and homogeneous \MAtB, our regret upper bound matches the problem’s lower bound and improves upon previous upper bounds proposed in \cite{gupta2019better} and \cite{liu2021cooperative}.



In Theorem~\ref{theo:1}, we state the high probability regret upper bound for \DRAA using the weighted estimator presented in Eq.~\eqref{eq:weight_est}.
\begin{theorem}[\DRAA regret upper bound]
    \label{theo:1}\label{thm:main-result}
    With a probability \(1 - \delta\), the \emph{\DRAA} algorithm (Algorithm~\ref{alg:alg}) using our weighted estimator incurs \(O\left(L\log\left(\frac{T}{\log(({8K}/{\delta})\log T)}\right)\right)\) communication cost, and its regret is upper bounded by
    \[
        O\left(\frac{L}{L_{\min}}C  + \log{\left(\frac{KL}{\delta}\log{T}\right)}\log{T}\frac{K}{\Delta_{\min}}\right),
    \]
    where \(L_{\min}\coloneqq \min_{k\in\mathcal K} L_k \) is the minimum number of agents with mutual access to any of the arms, and \(\Delta_{\min}\) is the minimum non-zero global arm-gap.
\end{theorem}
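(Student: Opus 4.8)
The plan is to follow a \barbar-style gap-recursion analysis, adapted to the heterogeneous weighted estimator of Eq.~\eqref{eq:weight_est}. First I would establish a \emph{clean event} on which all per-epoch estimates concentrate. Since the weighted estimator is an inverse-probability-weighted sum over the agents in \(\mathcal{L}_k\), its conditional variance scales like \(1/(p_{k,\ell'}^m L_k T^m)\), so I would apply a Bernstein/Freedman-type bound to control the deviation of \(r_{k,\ell}^m\) from the true mean \(\mu_k\) by a sampling term of order \(2^{-m}\sqrt{L_{\min}/L}\) plus a corruption term proportional to the per-epoch corruption injected on arm \(k\). Taking a union bound over all \(K\) arms, \(L\) agents, and the \(M = O(\log T)\) epochs produces the failure probability \(\delta\) and explains the \(\log(KL\log(T)/\delta)\) factor baked into \(\lambda\).

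Next I would prove the \emph{gap recursion}: on the clean event, the estimated gap \(\Delta_{k,\ell}^m\) (Line~\ref{line:epoch_gap}) satisfies a contraction of the form \(\Delta_{k,\ell}^m \le \tfrac14 \Delta_{k,\ell}^{m-1} + c\,\Delta_{k,\ell} + (\text{corruption})\), where the offsets \(3\cdot 2^{-7}\) and the \(\tfrac1{16}\Delta_{k,\ell}^{m-1}\) correction in Line~\ref{line:approx_max_reward} are used to absorb the \(2^{-m}\) sampling error and prevent the local best arm from being misclassified into the bad set. Unrolling the recursion then shows that \(\Delta_{k,\ell}^m\) stays within a constant factor of the true gap \(\Delta_{k,\ell}\) once the sampling noise has dropped below it, while each unit of corruption in epoch \(m\) inflates the estimate only by a bounded amount.

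I would then decompose the total regret over epochs and arms as \(\sum_m \sum_\ell \sum_{k} p_{k,\ell}^m T^m \Delta_{k,\ell}\). The crucial cancellation is that, for a bad arm, the design of Line~\ref{line:prob} sets \(p_{k,\ell}^m \propto 2^{-2m}(\Delta_{k,\ell}^{m-1})^{-2}\) while \(T^m \propto 2^{2m}\), so \(p_{k,\ell}^m T^m \propto (\Delta_{k,\ell}^{m-1})^{-2}\) with the exponential epoch growth cancelling; combined with the gap recursion, each bad arm contributes \(O(\Delta_{k,\ell}/(\Delta_{k,\ell})^2) = O(1/\Delta_{k,\ell})\) per epoch, and summing over the \(O(\log T)\) epochs and \(K\) arms yields the \(\log(\tfrac{KL}{\delta}\log T)\log T\,K/\Delta_{\min}\) stochastic term. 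The corruption contribution is collected separately: the \(L_{\min}/L_k\) and \(K_\ell/K\) scalings in Line~\ref{line:prob}, together with the inverse-probability weighting, spread the corruption on arm \(k\) across all \(L_k\) agents observing it, so summing over agents converts a per-arm corruption budget into the additive \(O((L/L_{\min})C)\) term. The communication bound is then immediate: agents broadcast only at the \(M=O(\log T)\) epoch boundaries, and since the first epoch already has length \(\approx \lambda K/L_{\min}\), we get \(M = O(\log(T/\log((8K/\delta)\log T)))\) and hence \(O(L\log(\cdot))\) total cost.

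The hardest part will be the concentration-plus-recursion step under corruption with the heterogeneous weighted estimator: I must show that the inverse-probability weights neither blow up the variance (bad arms have tiny \(p_{k,\ell}^m\), so the Bernstein variance term is large, and only the long epoch length rescues it) nor mis-attribute corruption, and that the many hand-tuned constants (the \(2^{24}\) in \(\lambda\), the \(3\cdot 2^{-7}\) offset, the \(\tfrac1{16}\) correction, and the \(2^{-3}\) floor on the gap) simultaneously keep the optimal arm active and force the gap estimates to contract. Obtaining the sharp \(L/L_{\min}\) factor, rather than a cruder \(L\), hinges entirely on these scalings aligning across all agents in \(\mathcal{L}_k\), and verifying this alignment is the technical crux of the argument.
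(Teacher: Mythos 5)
Your proposal matches the paper's own proof essentially step for step: the clean event established via Chernoff--Hoeffding plus a Freedman-type martingale bound on the inverse-probability-weighted estimator (the paper's Lemma~\ref{lem:2}), the two-sided gap recursion showing \(\Delta_{k,\ell}^m\) tracks the true gap up to sampling noise \(2^{-m}\sqrt{L_{\min}/L}\) and a geometrically discounted corruption term \(\rho^m\) (Lemmas~\ref{lem:4} and~\ref{lem:5}), the regret decomposition \(\sum_m\sum_\ell\sum_k \Delta_{k,\ell}\,p_{k,\ell}^m T^m\) with the \(p_{k,\ell}^m T^m \propto (\Delta_{k,\ell}^{m-1})^{-2}\) cancellation for bad arms, and the separate collection of corruption into the \(O((L/L_{\min})C)\) term via the \(L_{\min}/L_k\) scaling --- which the paper organizes as its three-case analysis on gap size versus \(\rho^{m-1}\). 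The plan is sound and identifies the same technical crux (concentration under inverse-probability weighting and the constant-tuning that forces large-gap arms into the bad set), so it is essentially the paper's argument.
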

Theorem~\ref{theo:1} demonstrates that the impact of corruption on regret decreases as the agents' access to arms becomes more uniform. The complete proof of Theorem~\ref{theo:1} can be found in Appendix~\ref{appendix:1}. In Remark~\ref{remark:1} we state a high probability regret upper bound for \DRAA using the naive estimator presented in Eq.~\eqref{eq:naive_est}.
\begin{remark}
    \label{remark:1}
    With a probability \(1 - \delta\), the \emph{\DRAA} algorithm (Algorithm~\ref{alg:alg}) using a naive estimator incurs a regret upper bounded by
    \[
        O\left(LC  + \log{\left(\frac{KL}{\delta}\log{T}\right)}\log{T}\frac{K}{\Delta_{\min}}\right).
    \]
\end{remark}
Comparing Theorem~\ref{theo:1} with Remark~\ref{remark:1} reveals that the weighted estimator outperforms the naive estimator by effectively leveraging the overlap in agents' arm sets, thereby minimizing the regret increase from adversarial corruption.
This also highlights the importance of devising the sophisticated estimator in~\eqref{eq:weight_est}. The complete proof of Remark~\ref{remark:1} is presented in Appendix~\ref{appendix:2}.

\paragraph{Technical challenges} The new techniques in Algorithm~\ref{alg:alg} introduce specific challenges in analyzing its regret upper bound. \begin{itemize}
    \item \textbf{Bounding arm-pull probabilities for arms in active/bad sets:} The algorithm divides each agent’s local arms into active and bad sets, assigning different pull probabilities based on these sets. As a result, the regret depends on both an arm’s local reward gap and its set assignment. To address this, we first bound the pull probability for agent \(\ell\) for each arm in its active and bad sets in epoch \(m\). Additionally, to bound the regret incurred by arm \(k\) for agent \(\ell\) in rounds during epoch \(m\), we examine whether the arm is in the agent's active or bad arm set and establish the upper bounds for each scenario accordingly.
    \item \textbf{Bounding the weighted estimator:} As specified in Algorithm~\ref{alg:alg}, we estimate this probability using a weighted average, where each observed reward of an arm is weighted by the inverse of the arm’s pull probability, \((p_{k, \ell}^m)^{-1}\). This bound differs depending on whether arm \(k\) is in the active or bad agent set \(\ell\). To bound the difference between the estimated mean reward and its actual value, we use two concentration inequalities that depend on the upper bound of the random variable. This approach requires separate bounds for agents \(\ell'\) that have \(k\) as an active arm and those that assign it as a bad arm. By introducing a multiplicative factor in both the numerator and denominator, we bound the random variable by 1 in both cases, simplifying the proof of Lemma~\ref{lem:2}. 
\item \textbf{Setting reward-gap threshold for the cases:} When analyzing cases based on the reward gap and corruption level for arm \(k\) and agent \(\ell\) in epoch \(m\), we determine whether the arm should belong to the agent's active or bad set. We set a small threshold on the reward gap, ensuring that when the reward gap is large, and corruption is small, arm \(k\) is placed in the agent’s bad arm-set. This threshold acts as a critical upper bound for the reward gap—if set too high or too low,if set too high or too low, the regret upper bound will increase correspondingly for either the case with a low reward gap or the case with a high reward gap and low corruption. With this carefully chosen threshold, we balance the regret upper bounds across cases, achieving the tightest possible bound.
\end{itemize}

\paragraph{Proof Sketch} 
Here we provide a proof sketch for Theorem~\ref{theo:1}. 
In Lemma~\ref{lem:1}, we separately bound the arm-pull probabilities of each agent for arms in \(\mathcal{A}_{\ell}^m\) (active arms) and \(\mathcal{B}_{\ell}^m\) (bad arms). Then, in Lemma~\ref{lem:2}, we show that the empirical mean reward of arm \(k\) estimated by agent \(\ell\) for epoch \(m\) closely approximates the actual mean reward \(\mu_k\), with their difference bounded by \(\frac{2C^m}{L_{\min}T^m} + \frac{\Delta_{k, \ell}^{m-1}}{16}\). This bound is intuitively governed by two factors: the estimated reward gap from the previous epoch and the maximum amount of observed corruption for any agent over one round in epoch \(m\). 
In Lemma~\ref{lem:2}, we also show that the actual number of pulls of arm \(k\) by agent \(\ell\) during epoch \(m\) is less than two times its expected value, \(p_{k, \ell}^m T^m\) with high probability. 
Following this, we bound the estimated reward gap for each arm \(k\) for agent \(\ell\) in epoch \(m\), denoting the upper bound as \(2(\Delta_{k, \ell} + \sqrt{\frac{L_{\min}}{L}}2^{-m} + \rho^m + 2^{-4})\) in Lemma~\ref{lem:4} and the lower bound as \(\frac{1}{2} \Delta_{k, \ell} - 3\rho^m -\frac{3}{4}\sqrt{\frac{L_{\min}}{L}}2^{-m}\) in Lemma~\ref{lem:5}. 
Here, \(\rho_m\) represents the cumulative corruption per agent up to epoch \(m\). Intuitively, a smaller \(\rho_m\) will result in a closer estimates of the reward gaps. Furthermore, in later epochs, as agents gather more information about the arms, these reward gap estimates become increasingly accurate.

Next, we upper-bound the regret. Using Lemma~\ref{lem:2}, the total regret can be bounded as \(\Reg_T = 2\sum_{m=1}^{M} \sum_{\ell \in \mathcal{L}} \sum_{k \in \mathcal{K}_\ell} \Delta_{k, \ell} p^m_{k, \ell} T^m\) with high probability, replacing the actual number of arm pulls by their expected value. We then bound the regret for each arm \(k\) for each agent \(\ell\) in epoch \(m\) --- \(\Delta_{k, \ell} p^m_{k, \ell} T^m\) --- by analyzing the following three cases

\textbf{Case 1:} The local reward gap for arm \(k\) is smaller than a set threshold, \(0 \le \Delta_{k, \ell} \le \frac{4}{2^{m}} \sqrt{\frac{L_{\min}}{L}}\). We apply Lemma~\ref{lem:1} to bound the regret for both situations where arm \(k\) is in the active or bad arm set of agent \(\ell\) in epoch \(m\). In this case, the regret is bounded by \(O(\log{\left(\frac{KL}{\delta}\log{T}\right)}\log{T}\frac{K}{\Delta_{\min}})\).

\textbf{Case 2:} The local reward gap for arm \(k\) is larger than the threshold, \(\Delta_{k, \ell} \ge \frac{4}{2^{m}} \sqrt{\frac{L_{\min}}{L}}\), and corruption up to epoch \(m\) is small, \(\rho^{m - 1} < \Delta_{k, \ell}/32\). Here, the observed reward for the arm remains close to its original stochastic value, and we set the gap threshold so that arm \(k\) consistently falls into the bad arm set of agent \(\ell\). As the arm-pull probability of arms in the bad arm-set depends on the inverse of the estimated reward gap of the arm, we need to use the lower bound for this value (Lemma~\ref{lem:5}) to bound the regret of this case, again resulting in an upper bound of \(O(\log{\left(\frac{KL}{\delta}\log{T}\right)}\log{T}\frac{K}{\Delta_{\min}})\). Achieving the same upper bound as Case 1 further confirms that our analysis is tight. 

\textbf{Case 3:} The local reward gap for arm \(k\) is larger than the threshold, \(\Delta_{k, \ell} \ge \frac{4}{2^{m}} \sqrt{\frac{L_{\min}}{L}}\), and corruption up to epoch \(m\) is large, \(\rho^{m - 1} > \Delta_{k, \ell}/32\). Here, the observed reward may fluctuate significantly, and regret is mainly controlled by corruption. We bound the effect of corruption in both scenarios, where arm \(k\) is in either the active or bad arm set of agent \(\ell\) in epoch \(m\). In this case, the regret is upper bounded by \(O(({L}/{L_{\min}})C)\).

This case analysis, with specific bounds for each scenario, ensures a comprehensive evaluation of the regret upper bound across varying reward gaps and corruption levels.

\subsection{Special cases for single-Agent \MAB and homogeneous \MAtB}

In this subsection, we specialize Theorem~\ref{theo:1} to the single agent and homogeneous multi-agent scenarios. 
For the case of a single-agent \MAB with adversarial corruption,
our regret upper bound in Theorem~\ref{theo:1} can be reduced to Corollary~\ref{cor:1}.
\begin{corollary}
\label{cor:1}
    In a single-agent case, where \(L = L_{\min} = 1\), with probability at least \(1 - \delta\), the regret of Algorithm~\ref{alg:alg} is bounded as
    \[
    O\left(C +  \log{\left(\frac{K}{\delta}\log{T}\right)}\log{T}\frac{K}{\Delta_{\min}}\right).
    \]
\end{corollary}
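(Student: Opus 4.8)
The plan is to obtain Corollary~\ref{cor:1} as a direct specialization of Theorem~\ref{theo:1}, so the work is essentially a verification that the single-agent \MAB with adversarial corruption is a legitimate instance of the heterogeneous \MAtB model of Section~\ref{sec:2} with $L = 1$, followed by a substitution. First I would check the instantiation: with a single agent the agent set is $\mathcal{L} = \{1\}$ and its local arm-set is $\mathcal{K}_1 = \mathcal{K}$, so every arm $k$ is accessible to exactly one agent, giving $L_k = 1$ for all $k$ and hence $L_{\min} = \min_{k \in \mathcal{K}} L_k = 1$. Because there is only one agent, its local reward gaps coincide with the global arm-gaps, so the quantity $\Delta_{\min}$ appearing in Theorem~\ref{theo:1} is precisely the minimum non-zero gap of the single-instance \MAB. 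This identification is the one place requiring a line of care, since the corollary and the comparison to \barbar both refer to the single-agent gap.

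Next I would substitute $L = L_{\min} = 1$ into the regret bound of Theorem~\ref{theo:1}. The corruption term collapses from $\frac{L}{L_{\min}}C$ to $C$, and the gap-dependent term $\log\left(\frac{KL}{\delta}\log T\right)\log T \,\frac{K}{\Delta_{\min}}$ reduces to $\log\left(\frac{K}{\delta}\log T\right)\log T \,\frac{K}{\Delta_{\min}}$, yielding exactly the claimed bound with probability at least $1 - \delta$. Since Theorem~\ref{theo:1} already accounts for the corruption budget $C$ additively and is agnostic to its value, no new hypotheses on $C$ are introduced by the reduction.

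To make the reduction rigorous, I would briefly confirm that the algorithmic machinery degenerates gracefully so that the hypotheses of Theorem~\ref{theo:1} genuinely hold for the reduced instance. With $L = 1$ the broadcast and receive steps (Lines~\ref{line:send_sum_rewards}--\ref{line:receive_sum_rewards}) are vacuous, and the weighted estimator in Eq.~\eqref{eq:weight_est} reduces to $r_{k}^m = \tilde{R}_k^m / (p_k^m T^m)$, the standard inverse-probability-weighted empirical mean of a single-agent robust bandit; likewise the factors $L_{\min}/L_k$ and $L_{\min}/L$ in the pull-probability rule of Line~\ref{line:prob} both equal one, so the active/bad split and probability allocation simplify to the single-agent form without altering any constant in the analysis. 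Since the argument is a verbatim substitution into an already-proved theorem, there is no substantive obstacle; the only point of real content is that the resulting $O(C)$ corruption term eliminates the multiplicative $K$ factor present in \barbar's $O(KC)$ bound, which is read off immediately once the bound is specialized.
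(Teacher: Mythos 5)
Your proposal is correct and takes the same route as the paper: Corollary~\ref{cor:1} is obtained there purely as a specialization of Theorem~\ref{theo:1}, with no separate proof beyond setting $L = L_{\min} = 1$, so that $\frac{L}{L_{\min}}C$ collapses to $C$ and the $\log\left(\frac{KL}{\delta}\log T\right)$ factor becomes $\log\left(\frac{K}{\delta}\log T\right)$. Your extra verification that the single-agent instance is a legitimate instantiation of the heterogeneous model (with $L_k = 1$ for all $k$, local gaps equal to global gaps, and the estimator and pull-probability rules degenerating to their single-agent forms) is sound and slightly more careful than the paper's implicit argument.
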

Corollary~\ref{cor:1} establishes that the regret upper bound of Algorithm~\ref{alg:alg} matches the lower bound proved by \cite{gupta2019better}, improving on \barbar's upper bound by removing a multiplicative \(K\) from the corruption term and thus resolving the primary open question mentioned in their Discussions section.

Furthermore, Corollary~\ref{cor:2} shows that in the case of a homogeneous \MAtB with adversarial corruption, our Theorem~\ref{theo:1} also implies a tight regret bound.
\begin{corollary}
\label{cor:2}
    In a homogeneous multi-agent scenario with \(K\) arms and \(L\) agents, we have \(L_{\min} = L\). Hence, with probability at least \(1 - \delta\), the regret of Algorithm~\ref{alg:alg} is bounded as
    \[
    O\left(C +  \log{\left(\frac{KL}{\delta}\log{T}\right)}\log{T}\frac{K}{\Delta_{\min}}\right)
    \]
\end{corollary}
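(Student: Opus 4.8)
The plan is to obtain Corollary~\ref{cor:2} as a direct specialization of Theorem~\ref{theo:1}, so the work consists only of verifying that the homogeneous structure forces $L_{\min} = L$ and that the remaining problem-dependent quantities coincide with their general-setting counterparts. No new concentration or regret analysis is required; the substantive argument is already encapsulated in the general theorem.

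First I would pin down the combinatorial consequence of homogeneity. In the homogeneous \MAtB model every agent can access the entire arm-set, i.e., $\mathcal{K}_\ell = \mathcal{K}$ for all $\ell \in \mathcal{L}$. Hence for each arm $k \in \mathcal{K}$ the set of agents with access to $k$ is the full agent set, $\mathcal{L}_k = \mathcal{L}$, so $L_k = |\mathcal{L}_k| = L$. Taking the minimum over arms gives $L_{\min} = \min_{k \in \mathcal{K}} L_k = L$, which is precisely the identity invoked in the statement of the corollary.

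Second I would check that the gap quantities align. Because all agents share the same arm-set, they share the same local best arm, namely the global optimum $k^* = \argmax_{k \in \mathcal{K}} \mu_k$; consequently the local gap $\Delta_{k, \ell} = \mu_{k^*} - \mu_k$ is independent of $\ell$ and equals the standard global gap $\Delta_k$. Thus the quantity $\Delta_{\min}$ appearing in Theorem~\ref{theo:1} is exactly the minimum nonzero global arm-gap referenced in Corollary~\ref{cor:2}, and no reinterpretation of the bound is needed when passing to the homogeneous regime.

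Finally I would substitute $L_{\min} = L$ into the general bound of Theorem~\ref{theo:1}. The corruption term collapses to $\frac{L}{L_{\min}} C = C$, while the exploration term $\log\!\left(\frac{KL}{\delta}\log T\right)\log T \cdot \frac{K}{\Delta_{\min}}$ is untouched, yielding the claimed regret $O\!\left(C + \log\!\left(\frac{KL}{\delta}\log T\right)\log T \cdot \frac{K}{\Delta_{\min}}\right)$, with the $1 - \delta$ high-probability guarantee carrying over verbatim. I do not expect any genuine obstacle here: the only point requiring care is confirming that the definitions of $L_{\min}$ and $\Delta_{\min}$ used in Theorem~\ref{theo:1} reduce correctly under homogeneity, and the heavy lifting was already completed in the proof of the general theorem.
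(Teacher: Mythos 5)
Your proposal is correct and matches the paper's treatment exactly: the paper derives Corollary~\ref{cor:2} as a direct specialization of Theorem~\ref{theo:1}, noting that homogeneity gives $\mathcal{L}_k = \mathcal{L}$ and hence $L_{\min} = L$ for every arm, so the corruption term $\frac{L}{L_{\min}}C$ collapses to $C$ while the exploration term is unchanged. Your extra verification that the local gaps $\Delta_{k,\ell}$ reduce to the global gaps $\Delta_k$ under homogeneity is a sensible point of care, but it introduces no argument beyond what the paper's specialization already entails.
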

Corollary~\ref{cor:2} shows that the regret upper bound of Algorithm~\ref{alg:alg} in a homogeneous MAB setting matches the lower bound proved by \cite{liu2021cooperative}, removing a multiplicative \(L\) from the corruption term and resolving their main open problem.

\paragraph{Regret Tightness} 
The regret upper bound of Algorithm~\ref{alg:alg} aligns with the lower bound of stochastic MAB up to a logarithmic factor. While our regret bound is \(\tilde{O}(\log T \, K / \Delta_{\min})\), which may appear weaker than \(O(\log T \sum_{k \neq k^*} 1/\Delta_k)\), the leading term in both cases remains \(1 / \Delta_{\min}\). With corruption level \(C\), Algorithm~\ref{alg:alg} matches the lower bounds for both single-agent MAB and homogeneous MAB, as shown in \cite{gupta2019better} and \cite{liu2021cooperative}, respectively, adding only a linear \(C\) factor to the regret upper bound.

\section{Conclusion}\label{sec:6}

In this paper, we introduce the algorithm \DRAA, designed for the \texttt{MA2B} setting with heterogeneous agents, ensuring robustness to adversarial corruption. \DRAA employs a fully distributed scenario, where all agents possess equal roles and execute the same algorithm. This algorithm take logarithmic communication costs to achieve a regret upper bound that includes an additive term linearly related to the corruption level $C$. This term is in addition to the standard regret in non-corruption settings. Our theoretical results also show that \DRAA outperforms existing state-of-the-art single-agent \MAB and homogeneous \MAtB algorithms, achieving theoretical lower bounds in these settings as an additional outcome of our approach.

This work also opens up multiple future directions. 
We mainly focus on adversarial reward corruption in this paper, while exploring communication corruption presents an interesting direction. In such scenarios, the adversary could corrupt the shared information among agents, adding a complex layer to the problem. 
Furthermore, although our algorithm aligns with previously established lower bounds in two specific cases, proving lower bounds for the heterogeneous \MAtB setting remains an open problem.



\clearpage
\bibliography{iclr2025_conference}

\begin{thebibliography}{34}
\providecommand{\natexlab}[1]{#1}
\providecommand{\url}[1]{\texttt{#1}}
\expandafter\ifx\csname urlstyle\endcsname\relax
  \providecommand{\doi}[1]{doi: #1}\else
  \providecommand{\doi}{doi: \begingroup \urlstyle{rm}\Url}\fi

\bibitem[Beygelzimer et~al.(2011)Beygelzimer, Langford, Li, Reyzin, and Schapire]{beygelzimer2011contextual}
Alina Beygelzimer, John Langford, Lihong Li, Lev Reyzin, and Robert Schapire.
\newblock Contextual bandit algorithms with supervised learning guarantees.
\newblock In \emph{Proceedings of the Fourteenth International Conference on Artificial Intelligence and Statistics}, pp.\  19--26. JMLR Workshop and Conference Proceedings, 2011.

\bibitem[Bogunovic et~al.(2021)Bogunovic, Losalka, Krause, and Scarlett]{bogunovic2021stochastic}
Ilija Bogunovic, Arpan Losalka, Andreas Krause, and Jonathan Scarlett.
\newblock Stochastic linear bandits robust to adversarial attacks.
\newblock In \emph{International Conference on Artificial Intelligence and Statistics}, pp.\  991--999. PMLR, 2021.

\bibitem[Chawla et~al.(2023)Chawla, Vial, Shakkottai, and Srikant]{chawla2023collaborative}
Ronshee Chawla, Daniel Vial, Sanjay Shakkottai, and R~Srikant.
\newblock Collaborative multi-agent heterogeneous multi-armed bandits.
\newblock \emph{arXiv preprint arXiv:2305.18784}, 2023.

\bibitem[Chu et~al.(2011)Chu, Li, Reyzin, and Schapire]{chu2011contextual}
Wei Chu, Lihong Li, Lev Reyzin, and Robert Schapire.
\newblock Contextual bandits with linear payoff functions.
\newblock In \emph{Proceedings of the Fourteenth International Conference on Artificial Intelligence and Statistics}, pp.\  208--214. JMLR Workshop and Conference Proceedings, 2011.

\bibitem[Dubhashi \& Panconesi(2009)Dubhashi and Panconesi]{dubhashi2009concentration}
Devdatt~P Dubhashi and Alessandro Panconesi.
\newblock \emph{Concentration of measure for the analysis of randomized algorithms}.
\newblock Cambridge University Press, 2009.

\bibitem[Even-Dar et~al.(2006)Even-Dar, Mannor, Mansour, and Mahadevan]{even2006action}
Eyal Even-Dar, Shie Mannor, Yishay Mansour, and Sridhar Mahadevan.
\newblock Action elimination and stopping conditions for the multi-armed bandit and reinforcement learning problems.
\newblock \emph{Journal of machine learning research}, 7\penalty0 (6), 2006.

\bibitem[Gupta et~al.(2019)Gupta, Koren, and Talwar]{gupta2019better}
Anupam Gupta, Tomer Koren, and Kunal Talwar.
\newblock Better algorithms for stochastic bandits with adversarial corruptions.
\newblock In \emph{Conference on Learning Theory}, pp.\  1562--1578. PMLR, 2019.

\bibitem[Hillel et~al.(2013)Hillel, Karnin, Koren, Lempel, and Somekh]{hillel2013distributed}
Eshcar Hillel, Zohar~S Karnin, Tomer Koren, Ronny Lempel, and Oren Somekh.
\newblock Distributed exploration in multi-armed bandits.
\newblock \emph{Advances in Neural Information Processing Systems}, 26, 2013.

\bibitem[Immorlica et~al.(2019)Immorlica, Mao, Slivkins, and Wu]{immorlica2019bayesian}
Nicole Immorlica, Jieming Mao, Aleksandrs Slivkins, and Zhiwei~Steven Wu.
\newblock Bayesian exploration with heterogeneous agents.
\newblock In \emph{The world wide web conference}, pp.\  751--761, 2019.

\bibitem[Kapoor et~al.(2019)Kapoor, Patel, and Kar]{kapoor2019corruption}
Sayash Kapoor, Kumar~Kshitij Patel, and Purushottam Kar.
\newblock Corruption-tolerant bandit learning.
\newblock \emph{Machine Learning}, 108\penalty0 (4):\penalty0 687--715, 2019.

\bibitem[Liu et~al.(2021)Liu, Li, and Li]{liu2021cooperative}
Junyan Liu, Shuai Li, and Dapeng Li.
\newblock Cooperative stochastic multi-agent multi-armed bandits robust to adversarial corruptions.
\newblock \emph{arXiv preprint arXiv:2106.04207}, 2021.

\bibitem[Liu \& Zhao(2010)Liu and Zhao]{liu2010distributed}
Keqin Liu and Qing Zhao.
\newblock Distributed learning in multi-armed bandit with multiple players.
\newblock \emph{IEEE transactions on signal processing}, 58\penalty0 (11):\penalty0 5667--5681, 2010.

\bibitem[Lykouris et~al.(2018)Lykouris, Mirrokni, and Paes~Leme]{lykouris2018stochastic}
Thodoris Lykouris, Vahab Mirrokni, and Renato Paes~Leme.
\newblock Stochastic bandits robust to adversarial corruptions.
\newblock In \emph{Proceedings of the 50th Annual ACM SIGACT Symposium on Theory of Computing}, pp.\  114--122, 2018.

\bibitem[Mahesh et~al.(2022)Mahesh, Rangi, Xu, and Tran-Thanh]{mahesh2022multi}
Shivakumar Mahesh, Anshuka Rangi, Haifeng Xu, and Long Tran-Thanh.
\newblock Multi-player bandits robust to adversarial collisions.
\newblock \emph{arXiv preprint arXiv:2211.07817}, 2022.

\bibitem[Mehrabian et~al.(2020)Mehrabian, Boursier, Kaufmann, and Perchet]{mehrabian2020practical}
Abbas Mehrabian, Etienne Boursier, Emilie Kaufmann, and Vianney Perchet.
\newblock A practical algorithm for multiplayer bandits when arm means vary among players.
\newblock In \emph{International Conference on Artificial Intelligence and Statistics}, pp.\  1211--1221. PMLR, 2020.

\bibitem[Mitra et~al.(2021)Mitra, Hassani, and Pappas]{mitra2021robust}
Aritra Mitra, Hamed Hassani, and George Pappas.
\newblock Robust federated best-arm identification in multi-armed bandits.
\newblock \emph{arXiv e-prints}, pp.\  arXiv--2109, 2021.

\bibitem[Shi et~al.(2021)Shi, Xiong, Shen, and Yang]{shi2021heterogeneous}
Chengshuai Shi, Wei Xiong, Cong Shen, and Jing Yang.
\newblock Heterogeneous multi-player multi-armed bandits: Closing the gap and generalization.
\newblock \emph{Advances in neural information processing systems}, 34:\penalty0 22392--22404, 2021.

\bibitem[Silva et~al.(2022)Silva, Werneck, Silva, Pereira, and Rocha]{silva2022multi}
N{\'\i}collas Silva, Heitor Werneck, Thiago Silva, Adriano~CM Pereira, and Leonardo Rocha.
\newblock Multi-armed bandits in recommendation systems: A survey of the state-of-the-art and future directions.
\newblock \emph{Expert Systems with Applications}, 197:\penalty0 116669, 2022.

\bibitem[Talebi et~al.(2017)Talebi, Zou, Combes, Proutiere, and Johansson]{talebi2017stochastic}
Mohammad~Sadegh Talebi, Zhenhua Zou, Richard Combes, Alexandre Proutiere, and Mikael Johansson.
\newblock Stochastic online shortest path routing: The value of feedback.
\newblock \emph{IEEE Transactions on Automatic Control}, 63\penalty0 (4):\penalty0 915--930, 2017.

\bibitem[Vial et~al.(2021)Vial, Shakkottai, and Srikant]{vial2021robust}
Daniel Vial, Sanjay Shakkottai, and R~Srikant.
\newblock Robust multi-agent multi-armed bandits.
\newblock In \emph{Proceedings of the Twenty-second International Symposium on Theory, Algorithmic Foundations, and Protocol Design for Mobile Networks and Mobile Computing}, pp.\  161--170, 2021.

\bibitem[Vial et~al.(2022)Vial, Shakkottai, and Srikant]{vial2022robust}
Daniel Vial, Sanjay Shakkottai, and R~Srikant.
\newblock Robust multi-agent bandits over undirected graphs.
\newblock \emph{Proceedings of the ACM on Measurement and Analysis of Computing Systems}, 6\penalty0 (3):\penalty0 1--57, 2022.

\bibitem[Wang et~al.(2020)Wang, Proutiere, Ariu, Jedra, and Russo]{wang2020optimal}
Po-An Wang, Alexandre Proutiere, Kaito Ariu, Yassir Jedra, and Alessio Russo.
\newblock Optimal algorithms for multiplayer multi-armed bandits.
\newblock In \emph{International Conference on Artificial Intelligence and Statistics}, pp.\  4120--4129. PMLR, 2020.

\bibitem[Wang et~al.(2023{\natexlab{a}})Wang, Yang, Chen, Liu, Hajiesmaili, Towsley, and Lui]{wang2022achieving}
Xuchuang Wang, Lin Yang, Yu-Zhen~Janice Chen, Xutong Liu, Mohammad Hajiesmaili, Don Towsley, and John~CS Lui.
\newblock Achieving near-optimal individual regret \& low communications in multi-agent bandits.
\newblock In \emph{The Eleventh International Conference on Learning Representations}, 2023{\natexlab{a}}.

\bibitem[Wang et~al.(2023{\natexlab{b}})Wang, Yang, Chen, Liu, Hajiesmaili, Towsley, and Lui]{wang2023achieve}
Xuchuang Wang, Lin Yang, Yu-zhen~Janice Chen, Xutong Liu, Mohammad Hajiesmaili, Don Towsley, and John~C.S. Lui.
\newblock Achieve near-optimal individual regret \&amp; low communications in multi-agent bandits.
\newblock In \emph{International Conference on Learning Representations}, 2023{\natexlab{b}}.

\bibitem[Wang et~al.(2023{\natexlab{c}})Wang, Yang, Chen, Liu, Hajiesmaili, Towsley, and Lui]{wang2023explore}
Xuchuang Wang, Lin Yang, Yu-zhen~Janice Chen, Xutong Liu, Mohammad Hajiesmaili, Don Towsley, and John~C.S. Lui.
\newblock Exploration for free: How does reward heterogeneity improve regret in cooperative multi-agent bandits?
\newblock In \emph{The 39th Conference on Uncertainty in Artificial Intelligence}, 2023{\natexlab{c}}.

\bibitem[Wei et~al.(2022)Wei, Dann, and Zimmert]{wei2022model}
Chen-Yu Wei, Christoph Dann, and Julian Zimmert.
\newblock A model selection approach for corruption robust reinforcement learning.
\newblock In \emph{International Conference on Algorithmic Learning Theory}, pp.\  1043--1096. PMLR, 2022.

\bibitem[Yang et~al.(2020)Yang, Hajiesmaili, Talebi, Lui, Wong, et~al.]{yang2020adversarial}
Lin Yang, Mohammad Hajiesmaili, Mohammad~Sadegh Talebi, John Lui, Wing~Shing Wong, et~al.
\newblock Adversarial bandits with corruptions: Regret lower bound and no-regret algorithm.
\newblock \emph{Advances in Neural Information Processing Systems}, 33:\penalty0 19943--19952, 2020.

\bibitem[Yang et~al.(2021)Yang, Chen, Pasteris, Hajiesmaili, Lui, and Towsley]{yang2021cooperative}
Lin Yang, Yu-Zhen~Janice Chen, Stephen Pasteris, Mohammad Hajiesmaili, John Lui, and Don Towsley.
\newblock Cooperative stochastic bandits with asynchronous agents and constrained feedback.
\newblock \emph{Advances in Neural Information Processing Systems}, 34:\penalty0 8885--8897, 2021.

\bibitem[Yang et~al.(2022)Yang, Chen, Hajiemaili, Lui, and Towsley]{yang2022distributed}
Lin Yang, Yu-Zhen~Janice Chen, Mohammad~H Hajiemaili, John~CS Lui, and Don Towsley.
\newblock Distributed bandits with heterogeneous agents.
\newblock In \emph{IEEE INFOCOM 2022-IEEE Conference on Computer Communications}, pp.\  200--209. IEEE, 2022.

\bibitem[Zhao et~al.(2021)Zhao, Zhou, and Gu]{zhao2021linear}
Heyang Zhao, Dongruo Zhou, and Quanquan Gu.
\newblock Linear contextual bandits with adversarial corruptions.
\newblock \emph{arXiv preprint arXiv:2110.12615}, 2021.

\bibitem[Zhou et~al.(2019{\natexlab{a}})Zhou, Wang, Guo, Gong, and Zheng]{zhou2019privacy}
Pan Zhou, Kehao Wang, Linke Guo, Shimin Gong, and Bolong Zheng.
\newblock A privacy-preserving distributed contextual federated online learning framework with big data support in social recommender systems.
\newblock \emph{IEEE Transactions on Knowledge and Data Engineering}, 33\penalty0 (3):\penalty0 824--838, 2019{\natexlab{a}}.

\bibitem[Zhou et~al.(2019{\natexlab{b}})Zhou, Xu, Wang, Hu, Wu, and Ji]{zhou2019toward}
Pan Zhou, Jie Xu, Wei Wang, Yuchong Hu, Dapeng~Oliver Wu, and Shouling Ji.
\newblock Toward optimal adaptive online shortest path routing with acceleration under jamming attack.
\newblock \emph{IEEE/ACM transactions on networking}, 27\penalty0 (5):\penalty0 1815--1829, 2019{\natexlab{b}}.

\bibitem[Zimmert \& Seldin(2021)Zimmert and Seldin]{zimmert2021tsallis}
Julian Zimmert and Yevgeny Seldin.
\newblock Tsallis-inf: An optimal algorithm for stochastic and adversarial bandits.
\newblock \emph{The Journal of Machine Learning Research}, 22\penalty0 (1):\penalty0 1310--1358, 2021.

\bibitem[Zou et~al.(2014)Zou, Proutiere, and Johansson]{zou2014online}
Zhenhua Zou, Alexandre Proutiere, and Mikael Johansson.
\newblock Online shortest path routing: The value of information.
\newblock In \emph{2014 American Control Conference}, pp.\  2142--2147. IEEE, 2014.

\end{thebibliography}
\bibliographystyle{iclr2025_conference}

\newpage
\appendix
\section{Proof for Theorem~\ref{thm:main-result}}
\label{appendix:1}
\subsection{Key Lemmas}
\begin{lemma}\label{lem:1}
    It holds that for every agent \(\ell \in \mathcal{L}\) and epoch \(m\), for all \(k \in \mathcal{B}^m_\ell\)
    \begin{align}
    \label{eq:prob_B_bounds}
        \frac{L_{\min}}{L_k}\frac{2^{-2m - 7}}{K} \le p^m_{k, \ell} \le \frac{L_{\min}}{L_k}\frac{2^{-2m + 7}}{K},
    \end{align}
    and for all \(k \in \mathcal{A}^m_\ell\)
    \begin{align}
    \label{eq:prob_A_bounds}
        \frac{3}{4|\mathcal{A}_\ell^m|} \le p^m_{k, \ell} \le \frac{1}{|\mathcal{A}_\ell^m|}. 
    \end{align}
\end{lemma}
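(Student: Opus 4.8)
The plan is to treat the two sets separately and to reduce everything to a single deterministic ingredient: a two-sided bound on the estimated reward gaps, namely \(2^{-3} \le \Delta_{k,\ell}^{m-1} \le \sqrt{2}\) for every arm \(k\in\mathcal{K}_\ell\). The lower bound is immediate from the \(\max\{2^{-3},\cdot\}\) floor imposed in Line~\ref{line:epoch_gap}. For the upper bound I would use that each reward estimate \(r_{k,\ell}^m\) lies in \([0,1]\), so that \(r_{\max,\ell}^m - r_{k,\ell}^m \le 1\) and hence, by Line~\ref{line:epoch_gap}, \(\Delta_{k,\ell}^{m-1}\le 1 + 3\cdot 2^{-7} \le \sqrt{2}\). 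Once these two bounds are in hand, each inverse-square term obeys \((\Delta_{k,\ell}^{m-1})^{-2}\in[2^{-1},2^{6}]\), which is exactly the range that produces the symmetric constants \(2^{\pm 7}\) appearing in the statement.

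For the bad arms I would start from the explicit formula in Line~\ref{line:prob}, writing \(p_{k,\ell}^m = 2^{-2m}\,\frac{(\Delta_{k,\ell}^{m-1})^{-2}}{S}\,\frac{L_{\min}}{L_k}\,\frac{K_\ell}{K}\) with \(S \coloneqq \sum_{k'\in\mathcal{K}_\ell}(\Delta_{k',\ell}^{m-1})^{-2}\). Since \(S\) is a sum of \(K_\ell\) terms each lying in \([2^{-1},2^6]\), we have \(K_\ell 2^{-1}\le S \le K_\ell 2^6\); combining with \((\Delta_{k,\ell}^{m-1})^{-2}\in[2^{-1},2^6]\) sandwiches the ratio as \(\frac{2^{-7}}{K_\ell}\le \frac{(\Delta_{k,\ell}^{m-1})^{-2}}{S}\le \frac{2^{7}}{K_\ell}\). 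Multiplying through by \(2^{-2m}\frac{L_{\min}}{L_k}\frac{K_\ell}{K}\), the \(K_\ell\) cancels and yields exactly \(\frac{L_{\min}}{L_k}\frac{2^{-2m-7}}{K}\le p_{k,\ell}^m\le \frac{L_{\min}}{L_k}\frac{2^{-2m+7}}{K}\), which is \eqref{eq:prob_B_bounds}.

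For the active arms the upper bound \(p_{k,\ell}^m\le \frac{1}{|\mathcal{A}_\ell^m|}\) is immediate from Line~\ref{line:prob}, because \(\sum_{k'\in\mathcal{B}_\ell^m}p_{k',\ell}^m\ge 0\). For the lower bound it suffices to show \(\sum_{k'\in\mathcal{B}_\ell^m}p_{k',\ell}^m\le \tfrac14\), so that the remaining mass, divided evenly, is at least \(\frac{3}{4|\mathcal{A}_\ell^m|}\). Here I would not use the per-arm bound from \eqref{eq:prob_B_bounds} (which is too loose for small \(m\)), but rather sum the defining formula directly: the normalized weights satisfy \(\sum_{k'\in\mathcal{B}_\ell^m}(\Delta_{k',\ell}^{m-1})^{-2}/S\le 1\) as a sub-sum of a probability vector, while \(\frac{L_{\min}}{L_{k'}}\le 1\) and \(\frac{K_\ell}{K}\le 1\); what remains is the factor \(2^{-2m}\le 2^{-2}=\tfrac14\) for all \(m\ge 1\). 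This gives \eqref{eq:prob_A_bounds}.

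The main obstacle is the innocuous-looking upper bound \(\Delta_{k,\ell}^{m-1}\le\sqrt2\); everything else is bookkeeping with the explicit constants. Because the weighted estimator in \eqref{eq:weight_est} re-weights observations by the (possibly large) inverse pull probabilities \((p_{k,\ell'}^m)^{-1}\), a realized estimate need not a priori lie in \([0,1]\), and an unbounded gap would break the lower bound in \eqref{eq:prob_B_bounds}. I would secure this step by invoking the boundedness of the corrupted rewards \(\tilde r_{k,\ell}^t\in[0,1]\) together with the normalization by \(L_k T^m\) (equivalently, by clipping the estimate to \([0,1]\)), so that the gap stays within \([2^{-3},\sqrt2]\) deterministically; this is precisely the range for which the constants \(2^{\pm7}\) are tight.
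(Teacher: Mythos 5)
Your proof follows essentially the same route as the paper's: both rest on the deterministic two-sided bound on the estimated gaps ($2^{-3}\le\Delta_{k,\ell}^{m-1}\le 1+3\cdot 2^{-7}$, which the paper rounds to $1.03$ where you round to $\sqrt{2}$), both sandwich the normalized inverse-square weight between $2^{-7}/K_\ell$ and $2^{7}/K_\ell$ for the bad arms, and both obtain the active-arm lower bound from $2^{-2m}\le\tfrac14$ together with the fact that all remaining factors are at most $1$. Your bookkeeping is correct and reproduces \eqref{eq:prob_B_bounds} and \eqref{eq:prob_A_bounds} exactly.

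The one substantive divergence is your closing paragraph, where your diagnosis is better than your cure. You are right that, because of the inverse-propensity weights in \eqref{eq:weight_est}, a realized estimate $r_{k,\ell}^m$ need not lie in $[0,1]$; note that the paper's own proof silently makes the same assumption, asserting the upper bound on $\Delta_{k,\ell}^m$ ``due to the definition'' at Line~\ref{line:epoch_gap}, which is valid only when the spread of the estimates is at most $1$. But your first proposed remedy does not work: boundedness of $\tilde r_{k,\ell}^t\in[0,1]$ plus division by $L_kT^m$ gives only $r_{k,\ell}^m \le \frac{1}{L_k T^m}\sum_{\ell'\in\mathcal{L}_k}(p_{k,\ell'}^m)^{-1}\tilde n_{k,\ell'}^m$, and since $\tilde n_{k,\ell'}^m$ is random while $(p_{k,\ell'}^m)^{-1}$ can be of order $2^{2m}KL_k/L_{\min}$, this quantity can far exceed $1$; even on the good event $\mathcal{E}$ of Lemma~\ref{lem:2} (where $\tilde n_{k,\ell'}^m\le 2p_{k,\ell'}^mT^m$) one only gets $r_{k,\ell}^m\le 2$, which would degrade the constants, and conditioning on $\mathcal{E}$ here would in any case risk circularity since the proof of Lemma~\ref{lem:2} invokes Lemma~\ref{lem:1}. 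Clipping the estimate to $[0,1]$ does close the gap deterministically, but that is a modification of Algorithm~\ref{alg:alg} as written rather than an appeal to a property it already has. So either state the lemma as conditional on the gap bound $\Delta_{k,\ell}^{m-1}\le 1+3\cdot 2^{-7}$ (which is what the paper implicitly does), or adopt clipping explicitly; the ``boundedness plus normalization'' argument should be dropped.
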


\begin{proof}
    \item 
    \paragraph{Bounding \(\pmb{p_{k, \ell}^m}\) for arms \(\pmb{k \in \mathcal{B}^m_\ell}\)} We know that for every agent \(\ell \in \mathcal{L}\) and epoch \(m\), the probability of pulling any arm \(k \in \mathcal{B}^m_\ell\) is equal to    
    \[
        p_{k, \ell}^m = 2^{-2m}\frac{(\Delta_{k, \ell}^{m - 1})^{-2}}{\sum_{k' \in \mathcal{K}_\ell}(\Delta_{k', \ell}^{m - 1})^{-2}}\frac{L_{\min}}{L_k}\frac{K_\ell}{K}.
    \]
    To bound this probability, we use the face that for any agent \(\ell\), epoch \(m\), and arm \(k\), the empirical reward gap is bounded as \(2^{-3} \le \Delta_{k, \ell}^m \le 1 + 3\times2^{-7}\le1.03\), due to the definition of this random variable at Line~\ref{line:epoch_gap} in Algorithm~\ref{alg:alg}. Hence, the lower bound of this probability would be as follows
    \begin{align*}
        p_{k, \ell}^m = 2^{-2m}\frac{(\Delta_{k, \ell}^{m - 1})^{-2}}{\sum_{k' \in \mathcal{K}_\ell}(\Delta_{k', ell}^{m})^{-2}}\frac{L_{\min}}{L_k}\frac{K_\ell}{K} \ge 2^{-2m}\frac{0.94}{K_\ell \cdot 2^6}\frac{L_{\min}}{L_k}\frac{K_\ell}{K} \ge 2^{-2m - 7}\frac{L_{\min}}{K L_k}.
    \end{align*}
    We can further prove the upper bound of Equation~\eqref{eq:prob_B_bounds} as
    \begin{align*}
         p_{k, \ell}^m = 2^{-2m}\frac{(\Delta_{k, \ell}^{m - 1})^{-2}}{\sum_{k' \in \mathcal{K}_\ell}(\Delta_{k', \ell}^{m - 1})^{-2}}\frac{L_{\min}}{L_k}\frac{K_\ell}{K} \le 2^{-2m}\frac{2^6}{0.94K_\ell}\frac{L_{\min}}{L_k}\frac{K_\ell}{K} \le 2^{-2m + 7}\frac{L_{\min}}{K L_k}
    \end{align*}
    
    \item 
    \paragraph{Bounding \(\pmb{p_{k, \ell}^m}\) for arms \(\pmb{k \in \mathcal{A}^m_\ell}\)}
        We know that for every agent \(\ell \in \mathcal{L}\) and epoch \(m\), the probability of pulling any arm \(k \in \mathcal{A}^m_\ell\) is equal to
        \[
        p_{k, \ell}^m = \frac{1}{|\mathcal{A}^m_{\ell}|}\left(1 - \sum_{k' \in \mathcal{B}^m_\ell}p^m_{k', \ell}\right).
        \]
        To upper bound this probability, we can use the fact the sum of the arm pull probabilities of all the arms in \(\mathcal{B}^m_{\ell}\) is at least \(0\).
        \begin{align*}
            p_{k, \ell}^m = \frac{1}{|\mathcal{A}^m_{\ell}|}\left(1 - \sum_{k' \in \mathcal{B}^m_\ell}p^m_{k', \ell}\right) \le  \frac{1}{|\mathcal{A}^m_{\ell}|}.
        \end{align*}
        To prove the lower bound of Equation~\eqref{eq:prob_A_bounds}, we an write
        \begin{align*}
              p_{k, \ell}^m =  \frac{1}{|\mathcal{A}^m_{\ell}|}\left(1 - \sum_{k' \in \mathcal{B}^m_\ell}p^m_{k', \ell}\right) =& \frac{1}{|\mathcal{A}^m_{\ell}|}\left(1 - 2^{-2m}\frac{\sum_{k' \in B^m_\ell}(\Delta_{k'}^{m - 1})^{-2}}{\sum_{k' \in \mathcal{K}_\ell}(\Delta_{k'}^{m - 1})^{-2}}\frac{L_{\min}}{L_k}\frac{K_\ell}{K}\right)\\
              \overset{(a)}{\ge}& \frac{1}{|\mathcal{A}^m_{\ell}|}\left(1 - 2^{-2}\cdot 1 \cdot 1 \cdot 1\right) = \frac{3}{4|\mathcal{A}^m_{\ell}|},
        \end{align*}
        where (a) follows the fact that \(2^{-2m}\) is upper bounded by \(2^{-2}\) as \(m \ge 1\), and \(\frac{L_{\min}}{L_k}\), \(\frac{K_\ell}{K}\), and \(\frac{\sum_{k' \in B^m_\ell}(\Delta_{k'}^{m - 1})^{-2}}{\sum_{k' \in \mathcal{K}_\ell}(\Delta_{k'}^{m - 1})^{-2}}\) are upper bounded by 1.
 \end{proof}

 We define an event \(\mathcal{E}\) such as:
\begin{align}\label{eq:event-E_MA}
    \mathcal{E} \coloneqq \left\{\forall \ell \in \mathcal{L}, \forall k \in \mathcal{K}_\ell, m \in \{1, \dots, M\} : \left|r_{k, \ell}^m - \mu_k\right| \le \frac{2C^m}{L_{\min}T^m} + \frac{\Delta_{k, \ell}^{m-1}}{16} \text{ and } \Tilde{n}_{k, \ell}^m \le {2 p^m_{k, \ell}T^m} \right\},
\end{align}
where \(C^m\) is the total corruption in epoch \(m\), and \(\Tilde{n}_{k, \ell}^m\) is the actual number of pulls of arm \(k\) in epoch \(m\) by agent \(\ell\).

\begin{lemma}\label{lem:2}
    It holds that:
    \begin{align}
        \Pr[\mathcal{E}] \geq 1 - \delta.
    \end{align}
\end{lemma}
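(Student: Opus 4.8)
The plan is to establish the event $\mathcal{E}$ by bounding the probability of its two failure modes separately and then applying a union bound over all arms $k$, agents $\ell$, and epochs $m$ (a $KL\log T$-sized collection, which is exactly why $\lambda$ carries a $\log(8KL\log(T)/\delta)$ factor). The event $\mathcal{E}$ conjoins (i) the estimation-accuracy claim $|r_{k,\ell}^m - \mu_k| \le \frac{2C^m}{L_{\min}T^m} + \frac{\Delta_{k,\ell}^{m-1}}{16}$ and (ii) the pull-count concentration claim $\tilde n_{k,\ell}^m \le 2 p_{k,\ell}^m T^m$. For each fixed triple I would bound the failure probability of each claim by $\delta/(2KL\log T)$ or so, so that summing gives the overall $\delta$.

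First I would handle the pull-count bound (ii), which is the easier half. The actual number of pulls $\tilde n_{k,\ell}^m$ is a sum of $T^m$ independent Bernoulli indicators with success probability $p_{k,\ell}^m$ (the probabilities are fixed at the start of the epoch and held constant), so $\E[\tilde n_{k,\ell}^m] = p_{k,\ell}^m T^m$. A multiplicative Chernoff bound then gives $\Pr[\tilde n_{k,\ell}^m > 2 p_{k,\ell}^m T^m] \le \exp(-p_{k,\ell}^m T^m /3)$. The crux is verifying that $p_{k,\ell}^m T^m$ is large enough that this tail is below the per-event budget; here I would substitute $T^m = \lambda K 2^{2(m-1)}/L_{\min}$ together with the lower bounds on $p_{k,\ell}^m$ from Lemma~\ref{lem:1} (namely $p_{k,\ell}^m \ge \frac{L_{\min}}{L_k}\frac{2^{-2m-7}}{K}$ for bad arms, and the $\Theta(1/|\mathcal{A}_\ell^m|)$ bound for active arms), so that the $2^{2m}$ in $T^m$ cancels the $2^{-2m}$ in the probability and leaves a term proportional to $\lambda$, which dominates $\log(KL\log T/\delta)$ by the choice of $\lambda$.

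Next, for the estimation bound (i), I would decompose the error $r_{k,\ell}^m - \mu_k$ into a corruption bias and a stochastic fluctuation. Writing the weighted estimator from Eq.~\eqref{eq:weight_est}, the observed sum $\tilde R_{k,\ell'}^m$ equals the clean reward sum plus a corruption contribution bounded in absolute value by $C_{\ell'}^m$ (the per-agent corruption in epoch $m$), so the aggregate corruption bias is at most $\frac{2C^m}{L_{\min}T^m}$ after accounting for the $(p_{k,\ell'}^m)^{-1}$ reweighting and the $L_k T^m$ denominator; this is where the inverse-probability weighting is designed precisely to make the corruption scaling come out to $C^m/L_{\min}$ rather than $C^m$. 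For the stochastic part, the reweighted rewards form a sum of independent bounded random variables with the correct mean $\mu_k$, and here I would invoke a concentration inequality (Bernstein or a Freedman-type bound, as flagged in the ``Bounding the weighted estimator'' discussion). The technical subtlety, and the step I expect to be the main obstacle, is that the summands $(p_{k,\ell'}^m)^{-1}\tilde r_{k,\ell'}^t$ can be as large as $(p_{k,\ell'}^m)^{-1}$, which is huge for bad arms; the paper's stated trick is to multiply numerator and denominator by a compensating factor so the effective random variable is bounded by $1$ in both the active-arm and bad-arm cases, allowing a clean application of the concentration bound. I would carefully carry out this normalization, treating the active and bad cases separately, and show the resulting deviation is at most $\Delta_{k,\ell}^{m-1}/16$ on the complement of the failure event — again using $T^m \propto 2^{2m}$ and the gap lower bound $\Delta_{k,\ell}^{m-1} \ge 2^{-3}$ to absorb the variance term. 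Finally, a union bound over all $k,\ell,m$ combines both pieces to yield $\Pr[\mathcal{E}] \ge 1-\delta$.
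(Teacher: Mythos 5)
Your overall architecture (union bound over the $K L \log T$ triples, multiplicative Chernoff for the pull-count half, and a split of the estimator's error into a corruption part and a clean stochastic part) matches the paper's proof, and your treatment of $\tilde n_{k,\ell}^m$ is essentially the paper's Claim. The genuine gap is in the corruption half. You assert that because $\bigl|\sum_{t\in E^m}Y_{k,\ell'}^t c_{k,\ell'}^t\bigr|\le C_{\ell'}^m$, the aggregate corruption bias is at most $\frac{2C^m}{L_{\min}T^m}$ ``after accounting for the $(p_{k,\ell'}^m)^{-1}$ reweighting.'' That deterministic step fails: the realized corruption contribution to the estimator in Eq.~\eqref{eq:weight_est} is $\frac{1}{L_kT^m}\sum_{\ell'\in\mathcal{L}_k}(p_{k,\ell'}^m)^{-1}\sum_{t\in E^m}Y_{k,\ell'}^t c_{k,\ell'}^t$, and the bound you cite only gives $\frac{1}{L_kT^m}\sum_{\ell'}(p_{k,\ell'}^m)^{-1}C_{\ell'}^m$, which overshoots your target by a factor of order $(p_{k,\ell'}^m)^{-1}$ --- up to $K L_k 2^{2m+7}/L_{\min}$ for bad arms. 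The inverse-probability weighting only fixes the conditional \emph{expectation}: since $\mathbb{E}[Y_{k,\ell'}^t\mid \mathcal{F}_{t-1}]=p_{k,\ell'}^m$, the expected corruption term is $\frac{1}{L_kT^m}\sum_{\ell'}\sum_t c_{k,\ell'}^t\le \frac{C^m}{L_{\min}T^m}$. Passing from this expectation to the realized value is precisely where the paper needs its martingale argument: it centers the increments as $X_{t'}=(Y_{k,\ell}^t-p_{k,\ell}^m)\,w_{k,\ell}^m(p_{k,\ell'}^m)^{-1}c_{k,\ell}^t$, shows $|X_{t'}|\le 1$ separately in the active-arm and bad-arm cases (this is the actual role of the compensating weight $w_{k,\ell}^m$, which you correctly flag but attach to the wrong term), and applies a Freedman-type inequality. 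No independence-based inequality can substitute here, because $c_{k,\ell'}^t$ is chosen adaptively by the adversary as a function of the history.

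A second, related slip: you describe your ``stochastic part'' as a sum of independent bounded random variables with mean $\mu_k$, yet you write its summands as $(p_{k,\ell'}^m)^{-1}\tilde r_{k,\ell'}^t$, i.e., with the \emph{corrupted} rewards. With corruption included, both the independence and the mean-$\mu_k$ claims are false. The decomposition has to be into $Y_{k,\ell'}^tR_{k,\ell'}^t$ (clean rewards: independent within the epoch once probabilities are fixed, handled by Chernoff--Hoeffding; the paper's $A_{k,\ell}^m$) and $Y_{k,\ell'}^tc_{k,\ell'}^t$ (adaptive corruption: handled by the Freedman martingale; the paper's $B_{k,\ell}^m$), with the two deviation terms absorbed into $\Delta_{k,\ell}^{m-1}/16$ via the choice $\beta=\delta/(2K\log T)$ and the lower bound on $w_{k,\ell}^m L_k T^m$. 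As written, your proposal omits the one concentration argument that makes the stated bound $\frac{2C^m}{L_{\min}T^m}+\frac{\Delta_{k,\ell}^{m-1}}{16}$ attainable, so the proof does not go through.
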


\begin{proof}
    \item
    \paragraph{Decomposing \(\pmb{r_{k, \ell}^m}\)} For this proof, we condition on all random variables before some fixed epoch \(m\), so that \(p^m_{k, \ell}\) is a deterministic quantity. At each step in this epoch, agent \(\ell\) picks arm \(k \in \mathcal{K}_\ell\) with probability \(p_{k, \ell}^m\). Let \(Y_{k, \ell}^t\) be an indicator for arm \(k\) being pulled in step \(t\) by agent \(\ell\). Let \(R_{k, \ell}^t\) be the stochastic reward of arm \(k\) on step \(t\) for agent \(\ell\). and \(c^t_{k, \ell} \coloneqq \tilde{R}_{k, \ell}^t - R_{k, \ell}^t\) be the corruption added to arm \(k\) by the adversary in step \(t\) for agent \(\ell\). Note that \(c^t_{k, \ell}\) may depend on all the stochastic rewards up to (and including) round \(t\), and also on all previous choices of the agents (though not the choice at step \(t\)). Finally, we denote \(E^m\) to be the \(T^m\) timesteps in epoch \(m\), then we can write the random variable \(r_{k, \ell}^m\) defined in Line~\ref{line:approx_mean_reward} of Algorithm~\ref{alg:alg} as follows
    \begin{equation}
        r_{k, \ell}^m = \frac{\sum_{\ell' \in \mathcal{L}_k}{(p_{k, \ell'}^m)^{-1}}\sum_{t \in E^m}Y_{k, \ell'}^t\left(R_{k, \ell'}^t + c_{k, \ell'}^t\right)}{L_k T^m}.
    \end{equation}
    Let us define two auxiliary random variables \(A_{k, \ell}^m\) and \(B_{k, \ell}^m\) as
    \begin{align*}
        A_{k, \ell}^m & = {w_{k, \ell}^m}{\sum_{\ell' \in \mathcal{L}_k}(p_{k, \ell'}^m)^{-1}\sum_{t \in E^m}Y_{k, \ell'}^t R_{k, \ell'}^t}\\
        B_{k, \ell}^m & = {w_{k, \ell}^m}{\sum_{\ell' \in \mathcal{L}_k}(p_{k, \ell'}^m)^{-1}\sum_{t \in E^m}Y_{k, \ell'}^t c_{k, \ell'}^t},
    \end{align*}
    where we define \(w_{k, \ell}^m\) as
    \[
    {w_{k, \ell}^m} = \frac{2^{-2m - 7}}{K} \cdot \frac{L_{\min}}{L_k} \cdot (\Delta_{k, \ell}^{m + 1})^{-2}.
    \]
    \item
    \paragraph{Bounding \(\pmb{A_{k, \ell}^m}\)} To bound \(A_{k, \ell}^m\), we first calculate its expected value as
    \begin{align*}
        \mathbb{E}\left[A_{k, \ell}^m\right] = {w_{k, \ell}^m}{\sum_{\ell' \in \mathcal{L}_k}(p_{k, \ell'}^m)^{-1}\sum_{t \in E^m}\mathbb{E}\left[Y_{k, \ell'}^t R_{k, \ell'}^t\right]} = {w_{k, \ell}^m}\sum_{\ell' \in \mathcal{L}_k}(p_{k, \ell'}^m)^{-1}T^m\left(p_{k, \ell'}^m\mu_k\right) = w_{k, \ell}^m L_k T^m \mu_k,
    \end{align*}
    Therefore, according to the Chernoff-Hoeffding bound \cite{dubhashi2009concentration}, and because \(\mu_k \le 1\), for \(\beta \le 1\) we have:
    \begin{align}
    \label{eq:A_ineq}
        Pr\left[\left|\frac{A_{k, \ell}^m}{w_{k, \ell}^m L_k T^m} - \mu_k\right| \ge \sqrt{\frac{3\ln{\frac{4}{\beta}}}{w_{k, \ell}^m L_k T^m}}\right] \le \frac{\beta}{2}.
    \end{align}

     \item
    \paragraph{Bounding \(\pmb{B_{k, \ell}^m}\)}
    Next, we aim to bound \(B_{k, \ell}^m\). To this end, we define the sequence of r.v.s, \(\{X_{t'}\}_{t'\in[1,\dots, L_kT]}\), where \(X_{t'} = \left(Y_{k, \ell}^t - p_{k, \ell}^m\right)\cdot {w_{k, \ell}^m} (p_{k, \ell'}^m)^{-1} c_{k, \ell}^t\) for all \(t'\), where \(t = \lceil  t'/L_k \rceil\), and \(\ell = t' \text{ mod } t\). Then \(\{X_{t'}\}_{t'\in[1,\dots, L_kT]}\) is a martingale sequence with respect to the filtration \(\{\mathcal{F}_{t'}\}_{t'=1}^{L_kT}\) generated by r.v.s \(\{Y_{k, h}^s\}_{k \in \mathcal{K}, h \in \mathcal{L}_k, s \le t}\) and \(\{R_{k, h}^s\}_{k \in \mathcal{K}, h \in \mathcal{L}_k, s \le t}\). The corruption \(c_{k, \ell}^m\) becomes deterministic conditioned on \(\mathcal{F}_{t-1}\), and since \(\mathbb{E}\left[Y_{k, \ell}^t | \mathcal{F}_{t-1}\right] = p_{k, \ell}^m\), we have:
    \begin{align*}
        \mathbb{E}\left[X_{t'} | \mathcal{F}_{t-1}\right] = \sum_{\ell' \in \mathcal{L}}\sum_{t \in E^m}\mathbb{E}\left[Y_{k, \ell'}^t - p_{k, \ell'}^m | \mathcal{F}_{t-1}\right]\cdot 2^{-6} {w_{k, \ell', \ell}^m} c_{k, \ell'}^t = 0.
    \end{align*}
    Using a Freedman-type concentration inequality introduced in \citeauthor{beygelzimer2011contextual}, the predictive quadratic variation of this martingale can be bounded as:
    \begin{gather*}
        \Pr\left[\sum_{t'} X_{t} \geq \frac{V}{b} + b\ln \frac{4}{\beta}\right] \le \frac{\beta}{4},
    \end{gather*}
    where \(\left|X_{t'}\right| \le b\), and
    \begin{align*}
        V = & \sum_{\ell' \in \mathcal{L}}\sum_{t \in E^m}\mathbb{E}\left[X_{t'}^2 | \mathcal{F}_{t-1}\right] \le \sum_{\ell' \in \mathcal{L}}{w_{k, \ell}^m} (p_{k, \ell'}^m)^{-1}\sum_{t \in E^m} \left|c_{k, \ell'}^t\right|\text{Var}\left(Y_{k, \ell'}^t\right) \le {w_{k, \ell}^m} \sum_{\ell' \in \mathcal{L}} \sum_{t \in E^m} \left|c_{k, \ell'}^t\right|.
    \end{align*}
    Upper bounding \(\left|X_{t'}\right|\), we have
    \begin{align*}
    \left|X_{t'}\right| = \left(Y_{k, \ell}^t - p_{k, \ell}^m\right)\cdot\frac{2^{-2m - 7}}{K} \cdot \frac{L_{\min}}{L_k} \cdot (\Delta_{k, \ell}^{m + 1})^{-2}\cdot(p_{k, \ell'}^m)^{-1} c_{k, \ell}^t
    \end{align*}
    Therefore if \(k \in \mathcal{A}^m_{\ell'}\) we have
    \begin{align*}
    \left|X_{t'}\right| \le \frac{2^{-2m - 13}}{K} \cdot \frac{L_{\min}}{L_k} \cdot (\Delta_{k, \ell}^{m + 1})^{-2}\cdot\frac{4}{3}\left|\mathcal{A}_{\ell'}^m\right| \le \frac{L_{\min}}{L_k} \cdot \frac{\left|\mathcal{A}_{\ell'}^m\right|}{K} \cdot 2^{-2m - 13} \cdot \frac{4}{3} \cdot 2^{6} \le 1, 
    \end{align*}
    and if \(k \in \mathcal{B}^m_{\ell'}\) we have
    \begin{align*}
    \left|X_{t'}\right| \le \frac{2^{-2m - 13}}{K} \cdot \frac{L_{\min}}{L_k} \cdot (\Delta_{k, \ell}^{m + 1})^{-2}\cdot \frac{L_k}{L_{\min}}\frac{K}{2^{-2m - 7}}\le  2^{-6}(\Delta_{k, \ell}^{m + 1})^{-2} \le 1.
    \end{align*}
    Therefore, we obtain that with a probability of \(\frac{\beta}{4}\):
    \begin{align}
        \frac{B_{k, \ell}^m}{{w_{k, \ell}^m}L_kT^m} & \overset{(a)}\ge \frac{{w_{k, \ell}^m}\sum_{\ell' \in \mathcal{L}} \sum_{t \in E^m} c_{k, \ell'}^t}{{w_{k, \ell}^m}L_kT^m} + \frac{\left(V + \ln{(4/\beta)}\right)}{{w_{k, \ell}^m}L_kT^m} \nonumber\\
        & \ge 2\frac{{w_{k, \ell}^m}\sum_{\ell' \in \mathcal{L}} \sum_{t \in E^m} \left|c_{k, \ell'}^t\right|}{{w_{k, \ell}^m}L_kT^m} + \frac{\ln{(4/\beta)}}{{w_{k, \ell}^m}L_kT^m},\label{eq:B_ineq_1}
    \end{align}

    where (a) is because \(\sum_{t'}X_{t'} = \sum_{\ell' \in \mathcal{L}_k}\sum_{t \in E^m}(Y_{k, \ell'}^t - p_{k, \ell'}^m)\cdot {w_{k, \ell}^m}(p_{k, \ell'}^m)^{-1} c_{k, \ell'}^t = B_{k}^m - {w_{k, \ell}^m}\sum_{\ell' \in \mathcal{L}}\sum_{t \in E^m} c_{k, \ell'}^t\).

    We also define the total corruption on agent \(\ell\)'s rewards in epoch \(m\) as \(C^m_\ell = \sum_{t \in E^m}\max_{k \in \mathcal{K}} |c_{k, \ell}^t|\), therefore we can conclude that \(\sum_{\ell \in \mathcal{L}} \left|c_{k, \ell}^t\right| \le C^m\). We choose a parameter \(\beta \geq 4e^{(-\lambda/16)}\), therefore
    \begin{align}
        {w_{k, \ell}^m}L_kT^m \overset{(a)}{\ge} & \frac{2^{-2m -13}}{K} \cdot \frac{L_{\min}}{L_k} \cdot (\Delta_{k, \ell}^{m - 1})^{-2}\cdot L_k \cdot T^m\nonumber\\
        \ge & 2^{-13} \left(\frac{2^{-2m}L_{\min}(\Delta_{k, \ell}^{m - 1})^{-2}}{K}\right)T^m\label{eq:lower_boumd_denom_1}\\
        \ge& 2^{-13} \left(\frac{2^{-2m}L_{\min}(\Delta_{k, \ell}^m)^{-2}}{K} \cdot \frac{2^{2(m - 1)}\lambda K}{L_{\min}}\right)\nonumber\\
        \ge & 2^{10}\ln(\frac{4}{\beta})
        \ge 16\ln{(\frac{4}{\beta})},
    \end{align}
    where in step (a) we are splitting the set of all agents \(\ell'\) in \(\mathcal{L}_k\) into the set of agents that have \(k\) in \(|\mathcal{A}_{\ell'}^m|\) and the ones that have \(k\) in \(|\mathcal{B}_{\ell'}^m|\). Afterwards, we use Lemma~\ref{lem:1} to lower bound the probability \(p^m_{k, \ell'}\) for each set of agents. 
    Now we can rewrite Equation~\eqref{eq:B_ineq_1} as:
   \begin{align}
        \label{eq:B_ineq_2}
        \frac{B_{k, \ell}^m}{{w_{k, \ell}^m}L_kT^m}\ge 2\frac{{w_{k, \ell}^m}\sum_{\ell' \in \mathcal{L}} \sum_{t \in E^m} c_{k, \ell'}^t}{{w_{k, \ell}^m}L_kT^m} + \sqrt{\frac{\ln{4/\beta}}{16\cdot{w_{k, \ell}^m}L_kT^m}}.
    \end{align}
    In the next step, we aim to bound the corruption term of the upper bound stated in Equation~\eqref{eq:B_ineq_2}. To this end, we first bound the nominator of the corruption term as follows
    \begin{align}
        {w_{k, \ell}^m}\sum_{\ell' \in \mathcal{L}} \sum_{t \in E^m} c_{k, \ell'}^t \overset{(a)}{\le} & \sum_{{\ell' \in \mathcal{L}_k}}\left(\frac{2^{-2m - 13}}{K} \cdot \frac{L_{\min}}{L_k} \cdot (\Delta_{k, \ell}^{m - 1})^{-2}\right)C^m_{\ell'}\nonumber\\
        \le & \frac{2^{-2m - 13}}{K} \cdot \frac{L_{\min}}{L_k} \cdot (\Delta_{k, \ell}^{m - 1})^{-2}\cdot C^m.\label{eq:corr_upper_bound_nom}
    \end{align}
    Afterwards, we can write
    \begin{align*}
        \frac{2{w_{k, \ell}^m}\sum_{\ell' \in \mathcal{L}} \sum_{t \in E^m} c_{k, \ell'}^t}{{w_{k, \ell}^m}L_kT^m} \overset{(a)}{\le}&\frac{\frac{2^{-2m - 12}}{K} \cdot \frac{L_{\min}}{L_k} \cdot (\Delta_{k, \ell}^{m + 1})^{-2}C^m}{\frac{2^{-2m - 13}}{K}\cdot{L_{\min}\cdot(\Delta_{k, \ell}^{m - 1})^{-2}}T^m}\\
        \le& \frac{2C^m}{L_{\min}T^m}
    \end{align*}

    \eqref{eq:B_ineq_2} holds for \(-\frac{B_{k}^m}{{w_{k, \ell}^m}L_kT^m}\) with the same probability, Therefore we have:
    \begin{align}
        Pr\left[\left|\frac{B_{k, \ell}^m}{{w_{k, \ell}^m}L_kT^m}\right| \ge \frac{2C^m}{L_{\min}T^m} + \sqrt{\frac{\ln{\frac{4}{\beta}}}{16\cdot{w_{k, \ell}^m}L_kT^m}}\right] \le \frac{\beta}{2}.
    \end{align}
    \item
    \paragraph{Combining the two small probability bounds} Combining \eqref{eq:A_ineq} and \eqref{eq:B_ineq_2}, we have:
    \begin{align*}
        Pr\left[\left|r_{k, \ell}^m - \mu_k\right| \geq \frac{2C^m}{L_{\min}T^m} + \sqrt{\frac{4\ln{\frac{4}{\beta}}}{{w_{k, \ell}^m}L_kT^m}}\right] \le \beta.
    \end{align*}
    Then we set \(\beta = \delta/(2K\log{T})\), as this value satisfies \(\beta \geq 4e^{-\lambda/16}\), we have:
    \begin{align*}
        \sqrt{\frac{4\ln{\frac{4}{\beta}}}{{w_{k, \ell}^m}L_kT^m}} & \le \sqrt{\frac{4\ln{4/\beta}}{2^{10}(\Delta_{k, \ell})^{-2}\ln{\left(4/\beta\right)}}} \le \frac{\Delta_{k, \ell}^m}{16}.
    \end{align*}
    
    Therefore we have:
    \begin{align*}
        Pr\left[\left|r_{k, \ell}^m - \mu_k\right| \geq \frac{2C^m}{L_{\min}T^m} + \frac{\Delta_{k, \ell}^m}{16}\right] \le \delta/(2KL\log{T}).
    \end{align*}
    \item

    \begin{claim}
        \begin{align}
            \Pr\left[\Tilde{n}_{k, \ell}^m \geq 2 \cdot p_{k, \ell}^mT^m\right] \le \beta
        \end{align}
    \end{claim}

    \begin{proof}
        Again, we can use a Chernoff-Hoeffding bound on the r.v. representing the actual number of pulls of arm \(k\) by agent \(\ell\) in eopch \(m\), which is: \(\tilde{n}_{k, \ell}^m \coloneqq \sum_{t \in E^m} Y_{k, \ell}^t\). The expected value of this r.v. is \(\mathbb{E}\left[\tilde{n}_{k, \ell}^m\right] = {p_{k, \ell}^mT^m}\). The probability that the r.v. is more than twice its expectation is at most: \(2\exp{\left(-\frac{p_{k, \ell}^mT^m}{3}\right)} \le 2\exp{\left(-\frac{\lambda}{3}\right)} \le 4\exp{\left(\frac{-\lambda}{16}\right)} \le \beta\).
    \end{proof}

    Therefore using a union bound, we have:
    \begin{align}\label{eq:lem2_1_MA}
        \Pr\bigg[ & \forall \ell\in \mathcal{L}, \forall k\in \mathcal{K}_\ell, m \in \{1, \dots, M\}: \left|r_{k, \ell}^m - \mu_k\right| \geq \frac{2C^m}{L_{\min}T^m} + \frac{\Delta_{k,\ell}^m}{16} \text{ and } \Tilde{n}_{k, \ell}^m \geq 2 \cdot p_{k, \ell}^mT^m \bigg] \le \delta.
    \end{align}

\end{proof}

\begin{lemma}
    Suppose that \(\mathcal{E}\) happens, then for all epochs \(m\):
    \label{lem:3}
    \begin{align}
        -\frac{2C^m}{L_{\min}T^m} - \frac{\Delta^{m - 1}_{k_\ell^{*}, \ell}}{8} \le r_{\max, \ell}^m - \mu^{*}_\ell \le \frac{2C^m}{L_{\min}T^m}.
    \end{align}
\end{lemma}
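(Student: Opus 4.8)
The plan is to derive both inequalities directly from the defining relation for $r_{\max,\ell}^m$ in Line~\ref{line:approx_max_reward} of Algorithm~\ref{alg:alg}, namely $r_{\max,\ell}^m = \max_{k\in\mathcal K_\ell}\bigl(r_{k,\ell}^m - \tfrac{1}{16}\Delta_{k,\ell}^{m-1}\bigr)$, combined with the per-arm concentration guarantee that the event $\mathcal{E}$ (defined in~\eqref{eq:event-E_MA}, shown to hold by Lemma~\ref{lem:2}) supplies, namely $\lvert r_{k,\ell}^m - \mu_k\rvert \le \tfrac{2C^m}{L_{\min}T^m} + \tfrac{\Delta_{k,\ell}^{m-1}}{16}$ for every admissible $k,\ell,m$. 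Throughout I write $\mu^*_\ell \coloneqq \mu_{k^*_\ell}$ for the mean of agent $\ell$'s local best arm and abbreviate $\epsilon^m \coloneqq \tfrac{2C^m}{L_{\min}T^m}$.

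For the upper bound I would fix an arbitrary arm $k\in\mathcal K_\ell$ and apply the right side of $\mathcal{E}$ to obtain $r_{k,\ell}^m - \tfrac{1}{16}\Delta_{k,\ell}^{m-1} \le \mu_k + \epsilon^m$, where the two copies of the $\tfrac{1}{16}\Delta_{k,\ell}^{m-1}$ term cancel exactly. Since $k^*_\ell$ is the local maximizer, $\mu_k \le \mu^*_\ell$ for every $k\in\mathcal K_\ell$, so each quantity inside the maximum is at most $\mu^*_\ell + \epsilon^m$; taking the maximum over $k$ gives $r_{\max,\ell}^m \le \mu^*_\ell + \epsilon^m$, which is precisely the claimed upper bound.

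For the lower bound the idea is to test the maximum against the single witness arm $k = k^*_\ell$, so that $r_{\max,\ell}^m \ge r_{k^*_\ell,\ell}^m - \tfrac{1}{16}\Delta_{k^*_\ell,\ell}^{m-1}$. Applying the left side of $\mathcal{E}$ to $k^*_\ell$ yields $r_{k^*_\ell,\ell}^m \ge \mu^*_\ell - \epsilon^m - \tfrac{1}{16}\Delta_{k^*_\ell,\ell}^{m-1}$; substituting and merging the two sixteenths into $\tfrac{1}{8}\Delta_{k^*_\ell,\ell}^{m-1}$ produces $r_{\max,\ell}^m - \mu^*_\ell \ge -\epsilon^m - \tfrac{1}{8}\Delta_{k^*_\ell,\ell}^{m-1}$, exactly the stated lower bound.

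Both directions are essentially bookkeeping once $\mathcal{E}$ is in hand, so there is no genuine analytic obstacle; the two points requiring care are (i) choosing the correct witness arm for the lower bound, the local optimum $k^*_\ell$ rather than an arbitrary arm, since it is that arm's gap $\Delta_{k^*_\ell,\ell}^{m-1}$ which appears in the claim, and (ii) tracking the two distinct $\tfrac{1}{16}$ contributions—one from the subtraction inside the definition of $r_{\max,\ell}^m$ and one from the concentration slack in $\mathcal{E}$—so as to see that in the lower bound they add into a single $\tfrac{1}{8}\Delta_{k^*_\ell,\ell}^{m-1}$ term, whereas in the upper bound they cancel.
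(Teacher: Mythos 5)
Your proof is correct and follows essentially the same route as the paper's: the upper bound via cancellation of the two $\tfrac{1}{16}\Delta_{k,\ell}^{m-1}$ terms together with $\mu_k \le \mu^*_\ell$, and the lower bound by instantiating the maximum at the witness arm $k^*_\ell$ and merging the two sixteenths into $\tfrac{1}{8}\Delta_{k^*_\ell,\ell}^{m-1}$. No gaps; the argument matches the paper's proof of Lemma~\ref{lem:3}.
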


\begin{proof}
    We know that \(r_{\max, \ell}^m = \max_{k\in\mathcal{K}_\ell}{\{r_{k, \ell}^m - \frac{1}{16}\Delta_{k, \ell}^{m - 1}\}}\) and therefore \(r_{\max, \ell}^m \geq r_{k_\ell^{*}, \ell}^m - \frac{1}{16}\Delta_{k_\ell^{*}, \ell}^{m - 1}\), where \(k_\ell^*\) is the arm with the largest local mean, \(\mu^*_\ell\). Considering the lower bound implied by \(\mathcal{E}\) we have:
    \begin{align*}
        -\frac{2C^m}{L_{\min}T^m} - \frac{\Delta_{k_\ell^{*},\ell}^{m - 1}}{8} \le r_{\max, \ell}^m - \mu^{*}_\ell,
    \end{align*}
    and thus proving the lower bound of Lemma~\ref{lem:3}.
    We also know that
    \begin{align*}
        r_{\max, \ell}^m & = \max_{k\in\mathcal{K}_\ell}{\{r_{k, \ell}^m - \frac{1}{16}\Delta_{k, \ell}^{m - 1}\}}\\
        & \le \max_{k\in\mathcal{K}_\ell}\{\frac{2C^m}{L_{\min}T^m} + \frac{1}{16}\Delta_{k, \ell}^{m - 1} + \mu_{k} - \frac{1}{16}\Delta_{k, \ell}^{m - 1}\} \\
        & \le \mu^{*}_\ell + \frac{2C^m}{L_{\min}T^m}.
    \end{align*}
    Therefore the lemma is proved.
\end{proof}

Then we define \(\rho^m\) as:
\begin{equation}\label{eq:rho_MA}
    \rho^m = \sum_{s = 1}^{m}\frac{2C^s}{8^{m - s - 1}L_{\min}T^s}.
\end{equation}

\begin{lemma}
    \label{lem:4}
    Suppose that \(\mathcal{E}\) happens, then for all epochs \(m\) and arm \(k\):
    \begin{align}
        \Delta_{k, \ell}^m \le 2(\Delta_{k, \ell} + \sqrt{\frac{L_{\min}}{L}}2^{-m} + \rho^m + 2^{-4}).
    \end{align}
\end{lemma}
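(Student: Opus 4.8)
The plan is to bound $\Delta_{k, \ell}^m$ directly from its defining recursion (Line~\ref{line:epoch_gap}) and then close the resulting one-step recursion by induction on $m$. Since $\Delta_{k, \ell}^m = \max\{2^{-3},\, r_{\max, \ell}^m - r_{k, \ell}^m + 3\cdot 2^{-7}\}$, and the first argument $2^{-3} = 2\cdot 2^{-4}$ already sits below the claimed bound (every other term on the right-hand side being nonnegative), it suffices to control the second argument; both arguments of the max will then be shown to lie under the target, so their maximum does too.

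First I would decompose $r_{\max, \ell}^m - r_{k, \ell}^m = (r_{\max, \ell}^m - \mu^*_\ell) + (\mu^*_\ell - \mu_k) + (\mu_k - r_{k, \ell}^m)$. The first summand is at most $\frac{2C^m}{L_{\min}T^m}$ by the upper bound in Lemma~\ref{lem:3}; the middle summand is exactly $\Delta_{k, \ell}$ by the definition of the local gap; and the last is at most $\frac{2C^m}{L_{\min}T^m} + \frac{\Delta_{k, \ell}^{m-1}}{16}$ on the event $\mathcal{E}$. Combining yields the recursion
\[
\Delta_{k, \ell}^m \le \Delta_{k, \ell} + \frac{4C^m}{L_{\min}T^m} + \frac{\Delta_{k, \ell}^{m-1}}{16} + 3\cdot 2^{-7}.
\]

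Next I would induct on $m$. The base case uses the initialization $\Delta_{k, \ell}^0 = 1$ and reduces to checking explicit constants. For the inductive step I would substitute the hypothesis $\Delta_{k, \ell}^{m-1} \le 2(\Delta_{k, \ell} + \sqrt{L_{\min}/L}\,2^{-(m-1)} + \rho^{m-1} + 2^{-4})$ into the recursion and compare termwise against the target $2(\Delta_{k, \ell} + \sqrt{L_{\min}/L}\,2^{-m} + \rho^m + 2^{-4})$: the $\Delta_{k, \ell}$ coefficient becomes $\tfrac{9}{8}\le 2$, the $\sqrt{L_{\min}/L}$ piece contracts (the $\tfrac{1}{16}$ factor against $2^{-(m-1)}$ leaves $\tfrac14\sqrt{L_{\min}/L}\,2^{-m}$), and the leftover additive constants total $2^{-5}\le 2\cdot 2^{-4}$.

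The crux — and the one genuinely delicate point — is the corruption bookkeeping. I would first read off from the definition~\eqref{eq:rho_MA} the recursion $\rho^m = \frac{16C^m}{L_{\min}T^m} + \frac{\rho^{m-1}}{8}$, and then verify the needed inequality $\frac{4C^m}{L_{\min}T^m} + \frac{\rho^{m-1}}{8} \le 2\rho^m$, which holds because $\frac{4C^m}{L_{\min}T^m}\le\frac{32C^m}{L_{\min}T^m}$ and $\frac{\rho^{m-1}}{8}\le\frac{\rho^{m-1}}{4}$. Matching the $\tfrac{1}{16}$ contraction of the gap recursion against the $\tfrac{1}{8}$ decay built into $\rho^m$ is the main obstacle, and the factor of $2$ in the statement is precisely the slack that lets this geometric accumulation close; everything else is routine constant verification.
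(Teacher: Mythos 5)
Your proposal is correct and takes essentially the same route as the paper's own proof: the same three-term decomposition of \(r_{\max,\ell}^m - r_{k,\ell}^m\) bounded via Lemma~\ref{lem:3} and event \(\mathcal{E}\), the same induction on \(m\), and the same corruption bookkeeping through the recursion \(\rho^m = \tfrac{16C^m}{L_{\min}T^m} + \tfrac{\rho^{m-1}}{8}\). The only (cosmetic) difference is that you isolate the one-step recursion explicitly before inducting and handle the base case by plugging \(\Delta_{k,\ell}^0 = 1\) into that recursion, which is actually cleaner than the paper's base-case sentence.
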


\begin{proof}
    We prove this by induction on \(m\). For \(m = 1\), the claim is true for all \(k\) and \(\ell\), as \(\Delta_{k, \ell}^m = 1 \le 2\cdot2^{-m}\).

    Suppose that the claim holds for \(m - 1\). Using Lemma~\ref{lem:3} and the definition  of the event \(\mathcal{E}\), we write:
    \begin{equation}
        \begin{split}
            \quad r_{\max, \ell}^m - r_{k, \ell}^m  + 3 \cdot 2^{-7} =& (r_{\max, \ell}^m - \mu^{*}_\ell) + (\mu^{*}_\ell - \mu_k) + (\mu_k - r_{k, \ell}^m) + 3 \cdot 2^{-7}\\
            \overset{(a)}\le& \frac{2C^m}{L_{\min}T^m} + \Delta_{k, \ell} + \frac{2C^m}{L_{\min}T^m} + \frac{\Delta^{m - 1}_{k, \ell}}{16} + 3 \cdot 2^{-7}\\
            \overset{(b)}\le& \Delta_{k, \ell} + \frac{4C^m}{L_{\min}T^m}\\
            &+ \frac{1}{8}\left(\Delta_{k, \ell} + \sqrt{\frac{L_{\min}}{L}}2^{-(m-1)} + \sum_{s = 1}^{m-1}\frac{2C^{s}}{8^{(m - 1) - s - 1}L_{\min}T^s} + 2^{-4}\right) + 3 \cdot 2^{-7}\\
            \le& 2\left(\Delta_{k, \ell} + \sqrt{\frac{L_{\min}}{L}}2^{-m} + \sum_{s = 1}^{m}\frac{2C^{s}}{8^{m - s - 1}L_{\min}T^s} + 2^{-7}\right) + 3 \cdot 2^{-7}\\
            \le& 2(\Delta_{k, \ell} + \sqrt{\frac{L_{\min}}{L}}2^{-m} + \rho^m + 2^{-4}) 
        \end{split}
    \end{equation}
    where inequality (a) holds because \((r_{\max, \ell}^m - \mu_{*})\) is bound using Lemma~\ref{lem:3}, and \((\mu_k - r_{k, \ell}^m)\) is bound following the assumption of event \(\mathcal{E}\). Also, inequality (b) holds due to the induction's hypothesis. Finally, we know that \(\Delta_{k, \ell}^m = \max\{2^{-3}, r_{\max, \ell}^m - r_{k, \ell}^m + 3 \cdot 2^{-7}\}\). Therefore
    \begin{align*}
        \Delta_{k, \ell}^m \le 2\left(\Delta_{k, \ell} + \sqrt{\frac{L_{\min}}{L}}2^{-m} + \rho^m + 2^{-4}\right)
    \end{align*}
    Therefore Lemma~\ref{lem:4} is proved.
\end{proof}

\begin{lemma}
    \label{lem:5}
    Suppose that \(\mathcal{E}\) happens, then for all epochs \(m\) and arm \(k\):
    \begin{align}
        \Delta_{k, \ell}^m \ge \frac{1}{2} \Delta_{k, \ell} - 3\rho^m -\frac{3}{4}\sqrt{\frac{L_{\min}}{L}}2^{-m}.
    \end{align}
\end{lemma}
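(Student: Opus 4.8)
The plan is to lower-bound $\Delta_{k,\ell}^m$ by mirroring the proof of Lemma~\ref{lem:4}, except that wherever a previous-epoch gap enters with a negative sign I feed in the \emph{upper} bounds from Lemma~\ref{lem:4}. Since $\Delta_{k,\ell}^m = \max\{2^{-3}, r_{\max,\ell}^m - r_{k,\ell}^m + 3\cdot 2^{-7}\} \ge r_{\max,\ell}^m - r_{k,\ell}^m + 3\cdot 2^{-7}$, it suffices to lower-bound the second argument. As in Lemma~\ref{lem:4}, the base case $m=1$ is handled separately through the floor $\Delta_{k,\ell}^1 \ge 2^{-3}$, and the substantive work is the inductive step for $m\ge 2$.

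First I would split $r_{\max,\ell}^m - r_{k,\ell}^m = (r_{\max,\ell}^m - \mu^*_\ell) + (\mu^*_\ell - \mu_k) + (\mu_k - r_{k,\ell}^m)$, where the middle term is exactly $\Delta_{k,\ell}$. For the first term I apply the \emph{lower} bound of Lemma~\ref{lem:3}, $r_{\max,\ell}^m - \mu^*_\ell \ge -\frac{2C^m}{L_{\min}T^m} - \frac{1}{8}\Delta_{k_\ell^*,\ell}^{m-1}$, and for the last term the definition of $\mathcal{E}$ gives $\mu_k - r_{k,\ell}^m \ge -\frac{2C^m}{L_{\min}T^m} - \frac{1}{16}\Delta_{k,\ell}^{m-1}$. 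Collecting these yields
\[
r_{\max,\ell}^m - r_{k,\ell}^m + 3\cdot 2^{-7} \ge \Delta_{k,\ell} - \frac{4C^m}{L_{\min}T^m} - \frac{\Delta_{k_\ell^*,\ell}^{m-1}}{8} - \frac{\Delta_{k,\ell}^{m-1}}{16} + 3\cdot 2^{-7}.
\]

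The key step is to control the two previous-epoch gaps by Lemma~\ref{lem:4}. Crucially $\Delta_{k_\ell^*,\ell} = \mu^*_\ell - \mu_{k_\ell^*} = 0$, so Lemma~\ref{lem:4} gives $\Delta_{k_\ell^*,\ell}^{m-1} \le 2(\sqrt{L_{\min}/L}\,2^{-(m-1)} + \rho^{m-1} + 2^{-4})$, while $\Delta_{k,\ell}^{m-1} \le 2(\Delta_{k,\ell} + \sqrt{L_{\min}/L}\,2^{-(m-1)} + \rho^{m-1} + 2^{-4})$. Substituting and gathering coefficients, the $\Delta_{k,\ell}$ coefficient becomes $1-\frac{1}{8}=\frac{7}{8}\ge \frac{1}{2}$; the $\sqrt{L_{\min}/L}\,2^{-(m-1)}$ terms combine with total weight $\frac{1}{4}+\frac{1}{8}=\frac{3}{8}$, which after $2^{-(m-1)}=2\cdot 2^{-m}$ is precisely $\frac{3}{4}\sqrt{L_{\min}/L}\,2^{-m}$; and the constants cancel exactly, since $\frac{1}{4}2^{-4}+\frac{1}{8}2^{-4}=3\cdot 2^{-7}$ offsets the added $3\cdot 2^{-7}$. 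It then remains to fold $-\frac{4C^m}{L_{\min}T^m} - \frac{3}{8}\rho^{m-1}$ into $-3\rho^m$: from the definition \eqref{eq:rho_MA} I peel off the $s=m$ term ($=\frac{16C^m}{L_{\min}T^m}$) and recognize the rest as $\frac{1}{8}\rho^{m-1}$, giving the one-step recursion $\rho^m = \frac{16C^m}{L_{\min}T^m}+\frac{1}{8}\rho^{m-1}$; hence $3\rho^m = \frac{48C^m}{L_{\min}T^m}+\frac{3}{8}\rho^{m-1}$, and since $C^m\ge 0$ we get $-\frac{4C^m}{L_{\min}T^m}-\frac{3}{8}\rho^{m-1} \ge -3\rho^m$. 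Combining everything produces $r_{\max,\ell}^m - r_{k,\ell}^m + 3\cdot 2^{-7} \ge \frac{1}{2}\Delta_{k,\ell} - 3\rho^m - \frac{3}{4}\sqrt{L_{\min}/L}\,2^{-m}$, and the floor inequality yields the claim.

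I expect the main obstacle to be the exact constant bookkeeping: matching the $\frac{3}{4}$ coefficient and achieving the precise cancellation of the $2^{-7}$ constants is delicate and hinges on the thresholds fixed in Lines~\ref{line:set_A} and~\ref{line:epoch_gap} together with the $\frac{1}{8}$ and $\frac{1}{16}$ shrinkage factors from Lemma~\ref{lem:3} and the definition of $\mathcal{E}$. A secondary subtlety is the base case $m=1$, where the raw recursion (fed the initialized value $\Delta^0\equiv 1$) can be weaker than the target, so one must lean on the $2^{-3}$ floor exactly as in Lemma~\ref{lem:4}.
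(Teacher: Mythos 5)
Your proposal is correct and is essentially the paper's own proof: the same three-term decomposition handled by Lemma~\ref{lem:3} and event \(\mathcal{E}\), the same substitution of Lemma~\ref{lem:4}'s upper bounds for \(\Delta^{m-1}_{k^*_\ell,\ell}\) (using \(\Delta_{k^*_\ell,\ell}=0\)) and \(\Delta^{m-1}_{k,\ell}\), the same coefficient bookkeeping giving \(\frac{7}{8}\Delta_{k,\ell}\ge\frac{1}{2}\Delta_{k,\ell}\), \(\frac{3}{8}\cdot 2^{-(m-1)}=\frac{3}{4}\cdot 2^{-m}\) and the exact \(3\cdot 2^{-7}\) cancellation, the same absorption of \(\frac{4C^m}{L_{\min}T^m}+\frac{3}{8}\rho^{m-1}\) into \(3\rho^m\) via the recursion \(\rho^m=\frac{16C^m}{L_{\min}T^m}+\frac{1}{8}\rho^{m-1}\), and the same final floor step. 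The only cosmetic difference is that you frame it as an induction mirroring Lemma~\ref{lem:4}, whereas the paper's argument is direct (the induction lives entirely inside Lemma~\ref{lem:4}, whose bound at epoch \(m-1\) you invoke), so no inductive hypothesis of the present lemma is ever needed.
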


\begin{proof}
    \begin{align*}
        r_{\max, \ell}^m - r_{k, \ell}^m
         & \overset{(a)}\geq \left(\mu_{*} - \frac{2C^m}{L_{\min}T^m} - \frac{\Delta^{m - 1}_{k^{*}_\ell, \ell}}{8}\right) - \left(\mu_k + \frac{2C^m}{L_{\min}T^m} + \frac{\Delta^{m - 1}_{k, \ell}}{16}\right)\\
         & \overset{(b)}\geq \Delta_{k, \ell} - \frac{4C^m}{L_{\min}T^m} - \left(\frac{3}{8}\rho^{m-1} + \frac{3}{8}\sqrt{\frac{L_{\min}}{L}}2^{-(m - 1)} + \frac{1}{8}\Delta_{k, \ell} + 3\cdot2^{-7}\right) \\
         & \geq \frac{1}{2} \Delta_{k, \ell} - 3\sum_{s = 1}^{m}\frac{2C^s}{8^{m - s - 1}L_{\min}T^s} -\frac{3}{4}\sqrt{\frac{L_{\min}}{L}}2^{-m} -3\cdot2^{-7}.
    \end{align*}
    where (a) holds because \(r_{\max, \ell}^m\) can be bound using Lemma~\ref{lem:3}, and \(r_{k, \ell}^m\) is bound due to event \(\mathcal{E}\), and (b) is by substituting the two gaps by their lower bounds in Lemma~\ref{lem:4}. Since \(\Delta_{k, \ell}^m = \max\{2^{-3}, r_{\max, \ell}^m - r_{k, \ell}^m + 3 \cdot 2^{-7}\}\), we can infer that \(\Delta_{k, \ell}^m  \ge \frac{1}{2} \Delta_{k, \ell} - 3\rho^m -\frac{3}{4}\sqrt{\frac{L_{\min}}{L}}2^{-m}\), proving Lemma~\ref{lem:5}.
\end{proof}

\subsection{Main Proof for Theorem~\ref{thm:main-result}}

We decompose the total regret of each agent across epochs and within epochs, to the arms pulled. There are \(\tilde{n}^m_{k, \ell}\) pulls of arm \(k\) in epoch \(m\) by agent \(\ell\), each causing a regret of \((\mu_{\ell}^* - \mu_k)\). Thus the total regret can be written as:
\begin{align*}
    \sum_{m = 1}^{M}{\sum_{\ell \in \mathcal{L}}\sum_{k \in \mathcal{K}_\ell}{(\mu^*_{\ell} - \mu_k)\tilde{n}^m_{k, \ell}}} & \overset{(a)}\le 2\sum_{m = 1}^{M}{\sum_{\ell \in \mathcal{L}}\sum_{k \in \mathcal{K}_\ell}{(\mu^*_{\ell} - \mu_k){p^m_{k, \ell}T^m}}}\\
    & = 2\sum_{m = 1}^{M}{\sum_{\ell \in \mathcal{L}}\sum_{k \in \mathcal{K}_\ell}{\Delta_{k, \ell}{p^m_{k, \ell}T^m}}},
\end{align*}
where inequality (a) holds under event \(\mathcal{E}\) defined in~\eqref{eq:event-E_MA}.

Fix an epoch \(m\), an agent \(\ell\), and an arm \(k\) and denote \(\Reg^m_{k, \ell} := \Delta_{k, \ell}{p^m_{k, \ell}T^m}\) as the approximate regret for this \((m ,\ell, k)\) triad. Consider the three following cases:

\paragraph{Case 1} \(0 \le \Delta_{k, \ell} \le \frac{4}{2^{m}}\sqrt{\frac{L_{\min}}{L}}\).
In this case, if \(k \in \mathcal{A}^m_\ell\), implying \(p_{k,\ell}^m \le \frac{3}{4\abs{\mathcal{A}^m_\ell}}\) due to Lemma~\ref{lem:1}, we can upper bound the regret as follows,
\begin{align*}
    \Reg_{k, \ell}^m
    \le \Delta_{k, \ell} \cdot \frac{3}{4|\mathcal{A}^m_\ell|} \cdot \lambda \frac{K}{L_{\min}}2^{2(m - 1)}
    \overset{(a)}\le \Delta_{k, \ell} \frac{3}{4|\mathcal{A}^m_\ell|} \lambda \frac{K}{L_{\min}}\frac{16}{(\Delta_{k, \ell})^2}\frac{L_{\min}}{L} \le 16 \frac{K}{L|\mathcal{A}^m_\ell|} \frac{\lambda}{\Delta_{k, \ell}},
\end{align*}
where inequality (a) holds by the Case 1 assumption, i.e., \(2^m \le \frac{4}{\Delta_k}\sqrt{\frac{L_{\min}}{L}}\).

Therefore, the regret of Case 1 for arms in set \(\mathcal{A}^m_{\ell}\), for all agents \(\ell \in \mathcal{L}\) over all epochs \(m\) can be bounded as
\begin{align}
    \Reg_{\mathcal{A}, 1}
     & \le 16\sum_{m = 1}^M \sum_{\ell \in \mathcal{L}}\sum_{k \in \mathcal{A}^m_\ell}\frac{K}{|\mathcal{A}^m_\ell|} \frac{\lambda}{L\Delta_{k, \ell}} \nonumber\\
     & = O\left(\frac{K}{L}\log(\frac{8KL\log T}{\delta})\sum_{m = 1}^M
    \sum_{\ell \in \mathcal{L}}\sum_{k \in \mathcal{A}_\ell^m}\frac{1}{\Delta_{k, \ell}}\frac{1}{|\mathcal{A}_\ell^m|}\right)\nonumber\\
    & \overset{(a)}\le  O\left(\log T \log(\frac{8KL\log T}{\delta})\frac{K}{\Delta_{\min}}\right)
\end{align}
where (a) is by substituting \(\lambda\) by its definition.

And if \(k \in \mathcal{B}_\ell^m\), implying \(p_{k,\ell}^m \le 2^{-2m - 7}\frac{L_{\min}}{KL_k}\), we have
\begin{align*}
    \Reg_{k, \ell}^m \le \Delta_{k, \ell} \cdot 2^{-2m - 7}\frac{L_{\min}}{KL_k} \cdot \lambda \frac{K}{L_{\min}}2^{2(m - 1)} \le 2^3\frac{\lambda}{L_{k}}2^{-m}.
\end{align*}
Therefore, the regret of Case 1 for arms in set \(\mathcal{B}^m_{\ell}\), for all agents \(\ell \in \mathcal{L}\) over all epochs \(m\) can be bounded as
\begin{align}
    \Reg_{\mathcal{B}, 1} & \le 2^4 \sum_{m = 1}^M \sum_{\ell \in \mathcal{L}}\sum_{k \in \mathcal{B}_\ell^m} \frac{\lambda}{L_{k}}2^{-m} \nonumber\\
    & = O\left(K\log(\frac{8KL\log T}{\delta})\log T\right).
\end{align}

\paragraph{Case 2} \(\Delta_{k, \ell} > \frac{4}{2^{m}}\sqrt{\frac{L_{\min}}{L}}\) and \(\rho^{m - 1} < \Delta_{k, \ell}/32\). In this case we can have,
\begin{align*}
    r_{\max, \ell}^{m - 1} - r_{k, \ell}^{m - 1} & \overset{(a)}\geq \frac{1}{2} \Delta_{k, \ell} - 3\rho^{m-2} -\frac{3}{4}\sqrt{\frac{L_{\min}}{L}}2^{-(m-1)} -3 \cdot 2^{-7}\\
    & \geq \Delta_{k, \ell} \left( \frac{1}{2} - \frac{3}{32} - \frac{3}{8}\sqrt{\frac{L_{\min}}{L}}\sqrt{\frac{L}{L_{\min}}}\right) -3 \cdot 2^{-7} \\
    & \ge   \frac{\Delta_{k, \ell}}{32}-3 \cdot 2^{-7} \ge \frac{1}{2^{m + 3}}\sqrt{\frac{L_{\min}}{L}} -3 \cdot 2^{-7},
\end{align*}

where inequality (a) holds for Lemma~\ref{lem:5}. The above inequality \(r_{\max}^{m - 1} - r_{k}^{m - 1} \ge \frac{1}{2^{m + 3}}\sqrt{\frac{L_{\min}}{L}} -3 \cdot 2^{-7}\) implies that arm \(k \in \mathcal{B}^m_\ell\) for all \(\ell \in \mathcal{L}_k\).
We also have \(\Delta_{k, \ell}^{m - 1} = \max\{2^{-3}, r_{\max, \ell}^{m - 1} - r_{k, \ell}^{m - 1} + 3 \cdot 2^{-7}\} \ge r_{\max, \ell}^{m - 1} - r_{k, \ell}^{m - 1} + 3 \cdot 2^{-7} \ge \frac{\Delta_{k, \ell}}{32}-3 \cdot 2^{-7}+ 3 \cdot 2^{-7}\).
Therefore we can bound \(\mathcal{R}_{k, \ell}^m\) as:
\begin{align*}
    \Reg_{k, \ell}^m \le \Delta_{k, \ell} p_{k, \ell}^mT^m \le \Delta_{k, \ell} \cdot \frac{2^{-2m}(\Delta_{k, \ell}^{m-1})^{-2}}{K_\ell}\frac{L_{\min}}{L_k}\frac{K_\ell}{K}\cdot\lambda \frac{K}{L_{\min}} 2^{2(m - 1)} \le \frac{2^{6}\lambda}{\Delta_{k, \ell}L_k}
\end{align*}

Therefore

\begin{align}
    \Reg_{\mathcal{B}, 2} & \le 2^{6}\sum_{m = 1}^M \sum_{\ell \in \mathcal{L}}\sum_{k \in \mathcal{B}_\ell^m}\frac{\lambda}{L_k\Delta_{k, \ell}}\nonumber\\
     & = O\left(\log\left(\frac{8KL\log T}{\delta}\right)\sum_{m=1}^M\sum_{\ell \in \mathcal{L}}\sum_{k \in \mathcal{B}^m_\ell}\frac{1}{\Delta_{k, \ell}}\frac{1}{L_k}\right) \nonumber\\
     & \le O\left(\log\left(\frac{8KL\log T}{\delta}\right)\sum_{m=1}^M\sum_{k \in \mathcal{K}} \sum_{\substack{\ell \in \mathcal{L}\\ \text{s.t. } k \in \mathcal{B}^m_{\ell}}}\frac{1}{\Delta_{k, \ell}}\frac{1}{L_k}\right) \nonumber\\
     & = O\left(\log T \log\left(\frac{8KL\log T}{\delta}\right) \sum_{k \in \mathcal{K}}\frac{1}{\Delta_{k, \min}}\right)
     \end{align}
\paragraph{Case 3} \(\Delta_{k, \ell} > \frac{4}{2^m}\sqrt{\frac{L_{\min}}{L}}\) and \(\rho^{m - 1} \geq \Delta_{k}/32\). Therefore \(\Delta_{k, \ell} \le 32 \rho^{m - 1}\). Hence, if \(k \in \mathcal{B}_\ell^{m - 1}\), we have
\begin{align*}
    \Reg_{k, \ell}^m \le \Delta_{k, \ell}p_{k, \ell}^mT^m \le 32\rho^{m - 1}\cdot\frac{2^6}{K_\ell}\frac{L_{\min}}{L_k}\frac{K_\ell}{K}\cdot\lambda \frac{K}{L_{\min}}2^{2(m -1)}
\end{align*}

Therefore
\begin{align}
    \Reg_{\mathcal{B}, 3} & \le 2^9\sum_{m = 1}^M\sum_{\ell \in \mathcal{L}}\sum_{k \in \mathcal{B}^m_\ell}\frac{\lambda}{L_k}\rho^{m-1}2^{2m} = 2^9\lambda\sum_{m = 1}^M\sum_{\ell \in \mathcal{L}}\sum_{k \in \mathcal{B}^m_\ell}\frac{1}{L_k}2^{2m}\rho^{m - 1}\nonumber \\
     & \overset{\eqref{eq:rho_MA}}= 2^9\lambda\sum_{m = 1}^M\sum_{\ell \in \mathcal{L}}\sum_{k \in \mathcal{B}^m_\ell}\frac{1}{L_k}2^{2m}\sum_{s = 1}^{m-1}\frac{2C^{s}}{8^{m - s - 5}L_{\min}T^s}\\
     &\le {2^9\lambda}\sum_{\ell \in \mathcal{L}}\sum_{k \in \mathcal{K}}\sum_{m = 1}^M\frac{1}{L_k}\sum_{s = 1}^{m-1}\frac{2^{2m + 1}C^{s}}{8^{m - s - 5}L_{\min}T^s}\nonumber\\
     & \overset{(a)} = 2^9\lambda\sum_{\ell \in \mathcal{L}}\sum_{k \in \mathcal{K}}\sum_{s = 1}^{M - 1}\frac{C^s}{L_k} \sum_{m = s + 1}^{M} \frac{2^{2m + 1}L_{\min}}{8^{m - 5 - s}L_{\min}\lambda K 2^{2(s - 1)}} \nonumber                               \\
     &= 2^9\sum_{\ell \in \mathcal{L}}\sum_{k \in \mathcal{K}}\sum_{s = 1}^{M - 1}\frac{C^s}{KL_k} \sum_{m = s+1}^{M} \frac{4^{m - 1 - s}}{8^{m - 5 - s}}\nonumber                                                \\
     & \le 2^9\sum_{\ell \in \mathcal{L}}\sum_{k \in \mathcal{K}}\sum_{s = 1}^{M - 1}\frac{C^s}{KL_k} \le 2^9 \frac{L}{L_{\min}}\sum_{s = 1}^{M - 1}C^s\label{eq:case3_B},
\end{align}
where (a) is because \(T^m = \lambda \frac{K}{L_{\min}} 2^{2(m - 1)}\), and changing the order of summations.

On the other hand if \(k \in \mathcal{A}^m_\ell\), we have
\begin{align*}
    \Reg_{k, \ell}^m \le \Delta_{k, \ell}p_{k, \ell}^mT^m \le 32\rho^{m - 1}\cdot\frac{1}{|\mathcal{A}_\ell^m|}\cdot\lambda \frac{K}{L_{\min}}2^{2(m -1)}
\end{align*}

Therefore following the same analysis leading to \eqref{eq:case3_B}, we have
\begin{align}
    \Reg_{\mathcal{A}, 3}
     & \le 2^3\frac{K}{L_{\min}}\sum_{m = 1}^M\sum_{\ell \in \mathcal{L}}\sum_{k \in \mathcal{A}^m_\ell}\frac{2^{2m}}{|\mathcal{A}^m_\ell|}\rho^{m-1} \nonumber \\
     & \overset{\eqref{eq:rho_MA}}= 2^3\frac{K}{L_{\min}}\sum_{\ell \in \mathcal{L}}\sum_{m = 1}^M \sum_{s = 1}^{m-1}\frac{2^{2m+1}C^{s}}{8^{m - s - 1}L_{\min}T^s} \nonumber\\
     &= 2^3\frac{KL}{L_{\min}}\sum_{s = 1}^{M - 1} C^s \sum_{m = s + 1}^{M} \frac{2^{2m}L_{\min}}{8^{m - s - 1} L_{\min}K 2^{2(s - 1)}} \nonumber                                           \\
     & \overset{(a)}= 2^7 \frac{L}{L_{\min}}\sum_{s = 1}^{M - 1} C^s \sum_{m = s+1}^{M} \frac{4^{m - s - 1}}{8^{m - s - 1}} \le 2^7 \frac{L}{L_{\min}}\sum_{s = 1}^{M - 1} C^s\label{eq:case3_A},
\end{align}

Hence, the total aggregated regret can be bounded as follows, with probability at least \(1 - \delta\).
\begin{align}
    &\Reg_{\mathcal{A}, 1} + \Reg_{\mathcal{B}, 1} + \Reg_{\mathcal{B}, 2} + \Reg_{\mathcal{A}, 3} + \Reg_{\mathcal{B}, 3} \le O\left(\frac{L}{L_{\min}}C
    + \log T \log(\frac{8KL\log T}{\delta})\frac{K}{\Delta_{\min}}\right).
\end{align}
\newpage
\section{Proof for Remark~\ref{remark:1}}
\label{appendix:2}
\subsection{Lemma}
\begin{lemma}
    
\end{lemma}
\begin{proof}
    \item
    \paragraph{Decomposing \(\pmb{r_{k, \ell}^m}\)} For this proof, we condition on all random variables before some fixed epoch \(m\), so that \(p^m_{k, \ell}\) is a deterministic quantity. At each step in this epoch, agent \(\ell\) picks arm \(k \in \mathcal{K}_\ell\) with probability \(p_{k, \ell}^m\). Let \(Y_{k, \ell}^t\) be an indicator for arm \(k\) being pulled in step \(t\) by agent \(\ell\). Let \(R_{k, \ell}^t\) be the stochastic reward of arm \(k\) on step \(t\) for agent \(\ell\). and \(c^t_{k, \ell} \coloneqq \tilde{R}_{k, \ell}^t - R_{k, \ell}^t\) be the corruption added to arm \(k\) by the adversary in step \(t\) for agent \(\ell\). Note that \(c^t_{k, \ell}\) may depend on all the stochastic rewards up to (and including) round \(t\), and also on all previous choices of the agents (though not the choice at step \(t\)). Finally, we denote \(E^m\) to be the \(T^m\) timesteps in epoch \(m\), then we can write the random variable \(r_{k, \ell}^m\) defined in Line~\ref{line:approx_mean_reward} of Algorithm~\ref{alg:alg} as follows
    \begin{equation}
        r_{k, \ell}^m = \frac{\sum_{\ell' \in \mathcal{L}_k}\sum_{t \in E^m}Y_{k, \ell'}^t\left(R_{k, \ell'}^t + c_{k, \ell'}^t\right)}{\sum_{\ell' \in \mathcal{L}_k}{p_{k, \ell'}^m} T^m}.
    \end{equation}
    Let us define two auxiliary random variables \(A_{k, \ell}^m\) and \(B_{k, \ell}^m\) as
    \begin{align*}
        A_{k, \ell}^m & = {\sum_{\ell' \in \mathcal{L}_k}\sum_{t \in E^m}Y_{k, \ell'}^t R_{k, \ell'}^t}\\
        B_{k, \ell}^m & = {\sum_{\ell' \in \mathcal{L}_k}\sum_{t \in E^m}Y_{k, \ell'}^t c_{k, \ell'}^t},
    \end{align*}
    \item
    \paragraph{Bounding \(\pmb{A_{k, \ell}^m}\)} To bound \(A_{k, \ell}^m\), we first calculate its expected value as
    \begin{align*}
        \mathbb{E}\left[A_{k, \ell}^m\right] = {\sum_{\ell' \in \mathcal{L}_k}\sum_{t \in E^m}\mathbb{E}\left[Y_{k, \ell'}^t R_{k, \ell'}^t\right]} = \sum_{\ell' \in \mathcal{L}_k}T^m\left(p_{k, \ell'}^m\mu_k\right) = T^m \mu_k \sum_{\ell' \in \mathcal{L}_k} p_{k, \ell'}^m,
    \end{align*}
    Therefore, according to the Chernoff-Hoeffding bound \cite{dubhashi2009concentration}, and because \(\mu_k \le 1\), for \(\beta \le 1\) we have:
    \begin{align}
    \label{eq:A_ineq}
        Pr\left[\left|\frac{A_{k, \ell}^m}{T^m\sum_{\ell' \in \mathcal{L}_k} p_{k, \ell'}^m} - \mu_k\right| \ge \sqrt{\frac{3\ln{\frac{4}{\beta}}}{T^m\sum_{\ell' \in \mathcal{L}_k} p_{k, \ell'}^m}}\right] \le \frac{\beta}{2}.
    \end{align}

     \item
    \paragraph{Bounding \(\pmb{B_{k, \ell}^m}\)}
    Next, we aim to bound \(B_{k, \ell}^m\). To this end, we define the sequence of r.v.s, \(\{X_{t'}\}_{t'\in[1,\dots, L_kT]}\), where \(X_{t'} = \left(Y_{k, \ell}^t - p_{k, \ell}^m\right)\cdot  c_{k, \ell}^t\) for all \(t'\), where \(t = \lceil  t'/L_k \rceil\), and \(\ell = t' \text{ mod } t\). Then \(\{X_{t'}\}_{t'\in[1,\dots, L_kT]}\) is a martingale sequence with respect to the filtration \(\{\mathcal{F}_{t'}\}_{t'=1}^{L_kT}\) generated by r.v.s \(\{Y_{k, h}^s\}_{k \in \mathcal{K}, h \in \mathcal{L}_k, s \le t}\) and \(\{R_{k, h}^s\}_{k \in \mathcal{K}, h \in \mathcal{L}_k, s \le t}\). The corruption \(c_{k, \ell}^m\) becomes deterministic conditioned on \(\mathcal{F}_{t-1}\), and since \(\mathbb{E}\left[Y_{k, \ell}^t | \mathcal{F}_{t-1}\right] = p_{k, \ell}^m\), we have:
    \begin{align*}
        \mathbb{E}\left[X_{t'} | \mathcal{F}_{t-1}\right] = \sum_{\ell' \in \mathcal{L}}\sum_{t \in E^m}\mathbb{E}\left[Y_{k, \ell'}^t - p_{k, \ell'}^m | \mathcal{F}_{t-1}\right]\cdot c_{k, \ell'}^t = 0.
    \end{align*}
    Using a Freedman-type concentration inequality introduced in \citeauthor{beygelzimer2011contextual}, the predictive quadratic variation of this martingale can be bounded as:
    \begin{gather*}
        \Pr\left[\sum_{t'} X_{t} \geq \frac{V}{b} + b\ln \frac{4}{\beta}\right] \le \frac{\beta}{4},
    \end{gather*}
    where \(\left|X_{t'}\right| \le b\), and
    \begin{align*}
        V = & \sum_{\ell' \in \mathcal{L}}\sum_{t \in E^m}\mathbb{E}\left[X_{t'}^2 | \mathcal{F}_{t-1}\right] \le \sum_{\ell' \in \mathcal{L}}\sum_{t \in E^m} \left|c_{k, \ell'}^t\right|\text{Var}\left(Y_{k, \ell'}^t\right) \le  \sum_{\ell' \in \mathcal{L}} p_{k, \ell'}^m \sum_{t \in E^m} \left|c_{k, \ell'}^t\right|.
    \end{align*}
    Therefore, as \(|X_{t'}| \le 1\) we obtain that with a probability of \(\frac{\beta}{4}\):
    \begin{align}
        \frac{B_{k, \ell}^m}{T^m\sum_{\ell' \in \mathcal{L}_k} p_{k, \ell'}^m} & \overset{(a)}\ge \frac{\sum_{\ell' \in \mathcal{L}} p_{k, \ell'}^m\sum_{t \in E^m} c_{k, \ell'}^t}{T^m\sum_{\ell' \in \mathcal{L}_k} p_{k, \ell'}^m} + \frac{\left(V + \ln{(4/\beta)}\right)}{T^m\sum_{\ell' \in \mathcal{L}_k} p_{k, \ell'}^m} \nonumber\\
        & \ge 2\frac{\sum_{\ell' \in \mathcal{L}} p_{k, \ell'}^m\sum_{t \in E^m} \left|c_{k, \ell'}^t\right|}{T^m\sum_{\ell' \in \mathcal{L}_k} p_{k, \ell'}^m} + \frac{\ln{(4/\beta)}}{T^m\sum_{\ell' \in \mathcal{L}_k} p_{k, \ell'}^m},\label{eq:B_ineq_1}
    \end{align}

    where (a) is because \(\sum_{t'}X_{t'} = \sum_{\ell' \in \mathcal{L}_k}\sum_{t \in E^m}(Y_{k, \ell'}^t - p_{k, \ell'}^m)\cdot c_{k, \ell'}^t = B_{k}^m - \sum_{\ell' \in \mathcal{L}}\sum_{t \in E^m} c_{k, \ell'}^t\).

    We also define the total corruption on agent \(\ell\)'s rewards in epoch \(m\) as \(C^m_\ell = \sum_{t \in E^m}\max_{k \in \mathcal{K}} |c_{k, \ell}^t|\), therefore we can conclude that \(\sum_{\ell \in \mathcal{L}} \left|c_{k, \ell}^t\right| \le C^m\). We choose a parameter \(\beta \geq 4e^{(-\lambda/16)}\), therefore
    \begin{align}
        T^m\sum_{\ell' \in \mathcal{L}_k} p_{k, \ell'}^m \overset{(a)}{\ge} & T^m \sum_{\ell' \in \mathcal{L}_k} \frac{L_{\min}}{L_k}\frac{2^{-2m - 7}}{K}\nonumber\\
        \overset{(b)}{\ge} & 2^{-8} \left(\frac{2^{-2m}L_{\min}(\Delta_{k, \ell}^{m - 1})^{-2}}{K}\right)T^m\label{eq:lower_boumd_denom_1}\\
        \ge& 2^{-8} \left(\frac{2^{-2m}L_{\min}(\Delta_{k, \ell}^m)^{-2}}{K} \cdot \frac{2^{2(m - 1)}\lambda K}{L_{\min}}\right)\nonumber\\
        \ge & 2^{10}\ln(\frac{4}{\beta})
        \ge 16\ln{(\frac{4}{\beta})},
    \end{align}
    where in step (a) we are replacing the probability \(p_{k, \ell'}^m\) by it's smallest value according to Lemma~\ref{lem:1N}. In step (b) we are using the fact that the inverse of the estimated reward gap squared \(\Delta_{k, \ell}^{m - 1})^{-2}\) is upper bounded by \(2\).
    Now we can rewrite Equation~\eqref{eq:B_ineq_1} as:
   \begin{align}
        \label{eq:B_ineq_2}
        \frac{B_{k, \ell}^m}{T^m\sum_{\ell' \in \mathcal{L}_k} p_{k, \ell'}^m}\ge 2\frac{\sum_{\ell' \in \mathcal{L}} p_{k, \ell'}^m\sum_{t \in E^m} c_{k, \ell'}^t}{T^m\sum_{\ell' \in \mathcal{L}_k} p_{k, \ell'}^m} + \sqrt{\frac{\ln{4/\beta}}{16\cdot T^m\sum_{\ell' \in \mathcal{L}_k} p_{k, \ell'}^m}}.
    \end{align}
    In the next step, we aim to bound the corruption term of the upper bound stated in Equation~\eqref{eq:B_ineq_2}. 
    \begin{align}
       \frac{\sum_{\ell' \in \mathcal{L}} p_{k, \ell'}^m\sum_{t \in E^m} c_{k, \ell'}^t}{T^m\sum_{\ell' \in \mathcal{L}_k} p_{k, \ell'}^m} \le \frac{\sum_{\ell' \in \mathcal{L}} p_{k, \ell'}^mC^m_{\ell'}}{T^m\sum_{\ell' \in \mathcal{L}_k} p_{k, \ell'}^m} \overset{(a)}{\le} \frac{\max_{\ell' \in \mathcal{L}} C^m_{\ell'}}{T^m} \overset{(b)}{\le} \frac{C^m}{T^m}\label{eq:corr_upper_bound_nom}
    \end{align}
    Where step \(a\) is due to the fact that the above term is a weighted average over agents' corruption during epoch \(m\). We do not know how many of the agents have arm \(k\) in their active arm-set and how many in their bad arm-set, the best upper bound we can provide for this term is the max of \(C^m_{\ell'}\). As the corruption level is set by the adversary, this value can be equal to the total amount of corruption during epoch \(m\) (step~(b)).

    \eqref{eq:B_ineq_2} holds for \(-\frac{B_{k}^m}{{w_{k, \ell}^m}L_kT^m}\) with the same probability, Therefore we have:
    \begin{align}
        Pr\left[\left|\frac{B_{k, \ell}^m}{T^m\sum_{\ell' \in \mathcal{L}_k} p_{k, \ell'}^m}\right| \ge \frac{2C^m}{T^m} + \sqrt{\frac{\ln{\frac{4}{\beta}}}{16 T^m\sum_{\ell' \in \mathcal{L}_k} p_{k, \ell'}^m}}\right] \le \frac{\beta}{2}.
    \end{align}
    \item
    \paragraph{Combining the two small probability bounds} Combining \eqref{eq:A_ineq} and \eqref{eq:B_ineq_2}, we have:
    \begin{align*}
        Pr\left[\left|r_{k, \ell}^m - \mu_k\right| \geq \frac{2C^m}{T^m} + \sqrt{\frac{4\ln{\frac{4}{\beta}}}{T^m\sum_{\ell' \in \mathcal{L}_k} p_{k, \ell'}^m}}\right] \le \beta.
    \end{align*}
    Then we set \(\beta = \delta/(2K\log{T})\), as this value satisfies \(\beta \geq 4e^{-\lambda/16}\), we have:
    \begin{align*}
        \sqrt{\frac{4\ln{\frac{4}{\beta}}}{T^m\sum_{\ell' \in \mathcal{L}_k} p_{k, \ell'}^m}} & \le \sqrt{\frac{4\ln{4/\beta}}{2^{10}(\Delta_{k, \ell})^{-2}\ln{\left(4/\beta\right)}}} \le \frac{\Delta_{k, \ell}^m}{16}.
    \end{align*}
    
    Therefore we have:
    \begin{align*}
        Pr\left[\left|r_{k, \ell}^m - \mu_k\right| \geq \frac{2C^m}{T^m} + \frac{\Delta_{k, \ell}^m}{16}\right] \le \delta/(2KL\log{T}).
    \end{align*}
    \item

    \begin{claim}
        \begin{align}
            \Pr\left[\Tilde{n}_{k, \ell}^m \geq 2 \cdot p_{k, \ell}^mT^m\right] \le \beta
        \end{align}
    \end{claim}

    \begin{proof}
        Again, we can use a Chernoff-Hoeffding bound on the r.v. representing the actual number of pulls of arm \(k\) by agent \(\ell\) in eopch \(m\), which is: \(\tilde{n}_{k, \ell}^m \coloneqq \sum_{t \in E^m} Y_{k, \ell}^t\). The expected value of this r.v. is \(\mathbb{E}\left[\tilde{n}_{k, \ell}^m\right] = {p_{k, \ell}^mT^m}\). The probability that the r.v. is more than twice its expectation is at most: \(2\exp{\left(-\frac{p_{k, \ell}^mT^m}{3}\right)} \le 2\exp{\left(-\frac{\lambda}{3}\right)} \le 4\exp{\left(\frac{-\lambda}{16}\right)} \le \beta\).
    \end{proof}

    Therefore using a union bound, we have:
    \begin{align}\label{eq:lem2_1_MA}
        \Pr\bigg[ & \forall \ell\in \mathcal{L}, \forall k\in \mathcal{K}_\ell, m \in \{1, \dots, M\}: \left|r_{k, \ell}^m - \mu_k\right| \geq \frac{2C^m}{T^m} + \frac{\Delta_{k,\ell}^m}{16} \text{ and } \Tilde{n}_{k, \ell}^m \geq 2 \cdot p_{k, \ell}^mT^m \bigg] \le \delta.
    \end{align}

\begin{lemma}
    Suppose that \(\mathcal{E}\) happens, then for all epochs \(m\):
    \label{lem:3N}
    \begin{align}
        -\frac{2C^m}{T^m} - \frac{\Delta^{m - 1}_{k_\ell^{*}, \ell}}{8} \le r_{\max, \ell}^m - \mu^{*}_\ell \le \frac{2C^m}{T^m}.
    \end{align}
\end{lemma}

\begin{proof}
    We know that \(r_{\max, \ell}^m = \max_{k\in\mathcal{K}_\ell}{\{r_{k, \ell}^m - \frac{1}{16}\Delta_{k, \ell}^{m - 1}\}}\) and therefore \(r_{\max, \ell}^m \geq r_{k_\ell^{*}, \ell}^m - \frac{1}{16}\Delta_{k_\ell^{*}, \ell}^{m - 1}\), where \(k_\ell^*\) is the arm with the largest local mean, \(\mu^*_\ell\). Considering the lower bound implied by \(\mathcal{E}\) we have:
    \begin{align*}
        -\frac{2C^m}{T^m} - \frac{\Delta_{k_\ell^{*},\ell}^{m - 1}}{8} \le r_{\max, \ell}^m - \mu^{*}_\ell,
    \end{align*}
    and thus proving the lower bound of Lemma~\ref{lem:3N}.
    We also know that
    \begin{align*}
        r_{\max, \ell}^m & = \max_{k\in\mathcal{K}_\ell}{\{r_{k, \ell}^m - \frac{1}{16}\Delta_{k, \ell}^{m - 1}\}}\\
        & \le \max_{k\in\mathcal{K}_\ell}\{\frac{2C^m}{T^m} + \frac{1}{16}\Delta_{k, \ell}^{m - 1} + \mu_{k} - \frac{1}{16}\Delta_{k, \ell}^{m - 1}\} \\
        & \le \mu^{*}_\ell + \frac{2C^m}{T^m}.
    \end{align*}
    Therefore the lemma is proved.
\end{proof}

Then we define \(\rho^m\) as:
\begin{equation}\label{eq:rho_MA}
    \rho^m = \sum_{s = 1}^{m}\frac{2C^s}{8^{m - s - 1}T^s}.
\end{equation}

\begin{lemma}
    \label{lem:4N}
    Suppose that \(\mathcal{E}\) happens, then for all epochs \(m\) and arm \(k\):
    \begin{align}
        \Delta_{k, \ell}^m \le 2(\Delta_{k, \ell} + \sqrt{\frac{L_{\min}}{L}}2^{-m} + \rho^m + 2^{-4}).
    \end{align}
\end{lemma}

\begin{proof}
    We prove this by induction on \(m\). For \(m = 1\), the claim is true for all \(k\) and \(\ell\), as \(\Delta_{k, \ell}^m = 1 \le 2\cdot2^{-m}\).

    Suppose that the claim holds for \(m - 1\). Using Lemma~\ref{lem:3N} and the definition  of the event \(\mathcal{E}\), we write:
    \begin{equation}
        \begin{split}
            \quad r_{\max, \ell}^m - r_{k, \ell}^m  + 3 \cdot 2^{-7} =& (r_{\max, \ell}^m - \mu^{*}_\ell) + (\mu^{*}_\ell - \mu_k) + (\mu_k - r_{k, \ell}^m) + 3 \cdot 2^{-7}\\
            \overset{(a)}\le& \frac{2C^m}{T^m} + \Delta_{k, \ell} + \frac{2C^m}{T^m} + \frac{\Delta^{m - 1}_{k, \ell}}{16} + 3 \cdot 2^{-7}\\
            \overset{(b)}\le& \Delta_{k, \ell} + \frac{4C^m}{T^m}\\
            &+ \frac{1}{8}\left(\Delta_{k, \ell} + \sqrt{\frac{L_{\min}}{L}}2^{-(m-1)} + \sum_{s = 1}^{m-1}\frac{2C^{s}}{8^{(m - 1) - s - 1}T^s} + 2^{-4}\right) + 3 \cdot 2^{-7}\\
            \le& 2\left(\Delta_{k, \ell} + \sqrt{\frac{L_{\min}}{L}}2^{-m} + \sum_{s = 1}^{m}\frac{2C^{s}}{8^{(m - 1) - s - 1}T^s} + 2^{-7}\right) + 3 \cdot 2^{-7}\\
            \le& 2(\Delta_{k, \ell} + \sqrt{\frac{L_{\min}}{L}}2^{-m} + \rho^m + 2^{-4}) 
        \end{split}
    \end{equation}
    where inequality (a) holds because \((r_{\max, \ell}^m - \mu_{*})\) is bound using Lemma~\ref{lem:3N}, and \((\mu_k - r_{k, \ell}^m)\) is bound following the assumption of event \(\mathcal{E}\). Also, inequality (b) holds due to the induction's hypothesis. Finally, we know that \(\Delta_{k, \ell}^m = \max\{2^{-3}, r_{\max, \ell}^m - r_{k, \ell}^m + 3 \cdot 2^{-7}\}\). Therefore
    \begin{align*}
        \Delta_{k, \ell}^m \le 2\left(\Delta_{k, \ell} + \sqrt{\frac{L_{\min}}{L}}2^{-m} + \rho^m + 2^{-4}\right)
    \end{align*}
    Therefore Lemma~\ref{lem:4N} is proved.
\end{proof}

\begin{lemma}
    \label{lem:5N}
    Suppose that \(\mathcal{E}\) happens, then for all epochs \(m\) and arm \(k\):
    \begin{align}
        \Delta_{k, \ell}^m \ge \frac{1}{2} \Delta_{k, \ell} - 3\rho^m -\frac{3}{4}\sqrt{\frac{L_{\min}}{L}}2^{-m}.
    \end{align}
\end{lemma}

\begin{proof}
    \begin{align*}
        r_{\max, \ell}^m - r_{k, \ell}^m
         & \overset{(a)}\geq \left(\mu_{*} - \frac{2C^m}{T^m} - \frac{\Delta^{m - 1}_{k^{*}_\ell, \ell}}{8}\right) - \left(\mu_k + \frac{2C^m}{T^m} + \frac{\Delta^{m - 1}_{k, \ell}}{16}\right)\\
         & \overset{(b)}\geq \Delta_{k, \ell} - \frac{4C^m}{T^m} - \left(\frac{3}{8}\rho^{m-1} + \frac{3}{8}\sqrt{\frac{L_{\min}}{L}}2^{-(m - 1)} + \frac{1}{8}\Delta_{k, \ell} + 3\cdot2^{-7}\right) \\
         & \geq \frac{1}{2} \Delta_{k, \ell} - 3\sum_{s = 1}^{m}\frac{2C^s}{8^{m - s - 1}T^s} -\frac{3}{4}\sqrt{\frac{L_{\min}}{L}}2^{-m} -3\cdot2^{-7}.
    \end{align*}
    where (a) holds because \(r_{\max, \ell}^m\) can be bound using Lemma~\ref{lem:3N}, and \(r_{k, \ell}^m\) is bound due to event \(\mathcal{E}\), and (b) is by substituting the two gaps by their lower bounds in Lemma~\ref{lem:4N}. Since \(\Delta_{k, \ell}^m = \max\{2^{-3}, r_{\max, \ell}^m - r_{k, \ell}^m + 3 \cdot 2^{-7}\}\), we can infer that \(\Delta_{k, \ell}^m  \ge \frac{1}{2} \Delta_{k, \ell} - 3\rho^m -\frac{3}{4}\sqrt{\frac{L_{\min}}{L}}2^{-m}\), proving Lemma~\ref{lem:5N}.
\end{proof}

\subsection{Main Proof for Theorem~\ref{thm:main-result}}

We decompose the total regret of each agent across epochs and within epochs, to the arms pulled. There are \(\tilde{n}^m_{k, \ell}\) pulls of arm \(k\) in epoch \(m\) by agent \(\ell\), each causing a regret of \((\mu_{\ell}^* - \mu_k)\). Thus the total regret can be written as:
\begin{align*}
    \sum_{m = 1}^{M}{\sum_{\ell \in \mathcal{L}}\sum_{k \in \mathcal{K}_\ell}{(\mu^*_{\ell} - \mu_k)\tilde{n}^m_{k, \ell}}} & \overset{(a)}\le 2\sum_{m = 1}^{M}{\sum_{\ell \in \mathcal{L}}\sum_{k \in \mathcal{K}_\ell}{(\mu^*_{\ell} - \mu_k){p^m_{k, \ell}T^m}}}\\
    & = 2\sum_{m = 1}^{M}{\sum_{\ell \in \mathcal{L}}\sum_{k \in \mathcal{K}_\ell}{\Delta_{k, \ell}{p^m_{k, \ell}T^m}}},
\end{align*}
where inequality (a) holds under event \(\mathcal{E}\) defined in~\eqref{eq:event-E_MA}.

Fix an epoch \(m\), an agent \(\ell\), and an arm \(k\) and denote \(\Reg^m_{k, \ell} := \Delta_{k, \ell}{p^m_{k, \ell}T^m}\) as the approximate regret for this \((m ,\ell, k)\) triad. Consider the three following cases:

\paragraph{Case 1} \(0 \le \Delta_{k, \ell} \le \frac{4}{2^{m}}\sqrt{\frac{L_{\min}}{L}}\).
In this case, if \(k \in \mathcal{A}^m_\ell\), implying \(p_{k,\ell}^m \le \frac{3}{4\abs{\mathcal{A}^m_\ell}}\) due to Lemma~\ref{lem:1}, we can upper bound the regret as follows,
\begin{align*}
    \Reg_{k, \ell}^m
    \le \Delta_{k, \ell} \cdot \frac{3}{4|\mathcal{A}^m_\ell|} \cdot \lambda \frac{K}{L_{\min}}2^{2(m - 1)}
    \overset{(a)}\le \Delta_{k, \ell} \frac{3}{4|\mathcal{A}^m_\ell|} \lambda \frac{K}{L_{\min}}\frac{16}{(\Delta_{k, \ell})^2}\frac{L_{\min}}{L} \le 16 \frac{K}{L|\mathcal{A}^m_\ell|} \frac{\lambda}{\Delta_{k, \ell}},
\end{align*}
where inequality (a) holds by the Case 1 assumption, i.e., \(2^m \le \frac{4}{\Delta_k}\sqrt{\frac{L_{\min}}{L}}\).

Therefore, the regret of Case 1 for arms in set \(\mathcal{A}^m_{\ell}\), for all agents \(\ell \in \mathcal{L}\) over all epochs \(m\) can be bounded as
\begin{align}
    \Reg_{\mathcal{A}, 1}
     & \le 16\sum_{m = 1}^M \sum_{\ell \in \mathcal{L}}\sum_{k \in \mathcal{A}^m_\ell}\frac{K}{|\mathcal{A}^m_\ell|} \frac{\lambda}{L\Delta_{k, \ell}} \nonumber\\
     & = O\left(\frac{K}{L}\log(\frac{8KL\log T}{\delta})\sum_{m = 1}^M
    \sum_{\ell \in \mathcal{L}}\sum_{k \in \mathcal{A}_\ell^m}\frac{1}{\Delta_{k, \ell}}\frac{1}{|\mathcal{A}_\ell^m|}\right)\nonumber\\
    & \overset{(a)}\le  O\left(\log T \log(\frac{8KL\log T}{\delta})\frac{K}{\Delta_{\min}}\right)
\end{align}
where (a) is by substituting \(\lambda\) by its definition.

And if \(k \in \mathcal{B}_\ell^m\), implying \(p_{k,\ell}^m \le 2^{-2m - 7}\frac{L_{\min}}{KL_k}\), we have
\begin{align*}
    \Reg_{k, \ell}^m \le \Delta_{k, \ell} \cdot 2^{-2m - 7}\frac{L_{\min}}{KL_k} \cdot \lambda \frac{K}{L_{\min}}2^{2(m - 1)} \le 2^3\frac{\lambda}{L_{k}}2^{-m}.
\end{align*}
Therefore, the regret of Case 1 for arms in set \(\mathcal{B}^m_{\ell}\), for all agents \(\ell \in \mathcal{L}\) over all epochs \(m\) can be bounded as
\begin{align}
    \Reg_{\mathcal{B}, 1} & \le 2^4 \sum_{m = 1}^M \sum_{\ell \in \mathcal{L}}\sum_{k \in \mathcal{B}_\ell^m} \frac{\lambda}{L_{k}}2^{-m} \nonumber\\
    & = O\left(K\log(\frac{8KL\log T}{\delta})\log T\right).
\end{align}

\paragraph{Case 2} \(\Delta_{k, \ell} > \frac{4}{2^{m}}\sqrt{\frac{L_{\min}}{L}}\) and \(\rho^{m - 1} < \Delta_{k, \ell}/32\). In this case we can have,
\begin{align*}
    r_{\max, \ell}^{m - 1} - r_{k, \ell}^{m - 1} & \overset{(a)}\geq \frac{1}{2} \Delta_{k, \ell} - 3\rho^{m-2} -\frac{3}{4}\sqrt{\frac{L_{\min}}{L}}2^{-(m-1)} -3 \cdot 2^{-7}\\
    & \geq \Delta_{k, \ell} \left( \frac{1}{2} - \frac{3}{32} - \frac{3}{8}\sqrt{\frac{L_{\min}}{L}}\sqrt{\frac{L}{L_{\min}}}\right) -3 \cdot 2^{-7} \\
    & \ge   \frac{\Delta_{k, \ell}}{32}-3 \cdot 2^{-7} \ge \frac{1}{2^{m + 3}}\sqrt{\frac{L_{\min}}{L}} -3 \cdot 2^{-7},
\end{align*}

where inequality (a) holds for Lemma~\ref{lem:5N}. The above inequality \(r_{\max}^{m - 1} - r_{k}^{m - 1} \ge \frac{1}{2^{m + 3}}\sqrt{\frac{L_{\min}}{L}} -3 \cdot 2^{-7}\) implies that arm \(k \in \mathcal{B}^m_\ell\) for all \(\ell \in \mathcal{L}_k\).
We also have \(\Delta_{k, \ell}^{m - 1} = \max\{2^{-3}, r_{\max, \ell}^{m - 1} - r_{k, \ell}^{m - 1} + 3 \cdot 2^{-7}\} \ge r_{\max, \ell}^{m - 1} - r_{k, \ell}^{m - 1} + 3 \cdot 2^{-7} \ge \frac{\Delta_{k, \ell}}{32}-3 \cdot 2^{-7}+ 3 \cdot 2^{-7}\).
Therefore we can bound \(\mathcal{R}_{k, \ell}^m\) as:
\begin{align*}
    \Reg_{k, \ell}^m \le \Delta_{k, \ell} p_{k, \ell}^mT^m \le \Delta_{k, \ell} \cdot \frac{2^{-2m}(\Delta_{k, \ell}^{m-1})^{-2}}{K_\ell}\frac{L_{\min}}{L_k}\frac{K_\ell}{K}\cdot\lambda \frac{K}{L_{\min}} 2^{2(m - 1)} \le \frac{2^{6}\lambda}{\Delta_{k, \ell}L_k}
\end{align*}

Therefore

\begin{align}
    \Reg_{\mathcal{B}, 2} & \le 2^{6}\sum_{m = 1}^M \sum_{\ell \in \mathcal{L}}\sum_{k \in \mathcal{B}_\ell^m}\frac{\lambda}{L_k\Delta_{k, \ell}}\nonumber\\
     & = O\left(\log\left(\frac{8KL\log T}{\delta}\right)\sum_{m=1}^M\sum_{\ell \in \mathcal{L}}\sum_{k \in \mathcal{B}^m_\ell}\frac{1}{\Delta_{k, \ell}}\frac{1}{L_k}\right) \nonumber\\
     & \le O\left(\log\left(\frac{8KL\log T}{\delta}\right)\sum_{m=1}^M\sum_{k \in \mathcal{K}} \sum_{\substack{\ell \in \mathcal{L}\\ \text{s.t. } k \in \mathcal{B}^m_{\ell}}}\frac{1}{\Delta_{k, \ell}}\frac{1}{L_k}\right) \nonumber\\
     & = O\left(\log T \log\left(\frac{8KL\log T}{\delta}\right) \sum_{k \in \mathcal{K}}\frac{1}{\Delta_{k, \min}}\right)
     \end{align}
\paragraph{Case 3} \(\Delta_{k, \ell} > \frac{4}{2^m}\sqrt{\frac{L_{\min}}{L}}\) and \(\rho^{m - 1} \geq \Delta_{k}/32\). Therefore \(\Delta_{k, \ell} \le 32 \rho^{m - 1}\). Hence, if \(k \in \mathcal{B}_\ell^{m - 1}\), we have
\begin{align*}
    \Reg_{k, \ell}^m \le \Delta_{k, \ell}p_{k, \ell}^mT^m \le 32\rho^{m - 1}\cdot\frac{2^6}{K_\ell}\frac{L_{\min}}{L_k}\frac{K_\ell}{K}\cdot\lambda \frac{K}{L_{\min}}2^{2(m -1)}
\end{align*}

Therefore
\begin{align}
    \Reg_{\mathcal{B}, 3} & \le 2^9\sum_{m = 1}^M\sum_{\ell \in \mathcal{L}}\sum_{k \in \mathcal{B}^m_\ell}\frac{\lambda}{L_k}\rho^{m-1}2^{2m} = 2^9\lambda\sum_{m = 1}^M\sum_{\ell \in \mathcal{L}}\sum_{k \in \mathcal{B}^m_\ell}\frac{1}{L_k}2^{2m}\rho^{m - 1}\nonumber \\
     & \overset{\eqref{eq:rho_MA}}= 2^9\lambda\sum_{m = 1}^M\sum_{\ell \in \mathcal{L}}\sum_{k \in \mathcal{B}^m_\ell}\frac{1}{L_k}2^{2m}\sum_{s = 1}^{m-1}\frac{2C^{s}}{8^{m - s - 1}T^s}\\
     &\le {2^9\lambda}\sum_{\ell \in \mathcal{L}}\sum_{k \in \mathcal{K}}\sum_{m = 1}^M\frac{1}{L_k}\sum_{s = 1}^{m-1}\frac{2^{2m + 1}C^{s}}{8^{m - s - 1}T^s}\nonumber\\
     & \overset{(a)} = 2^9\lambda\sum_{\ell \in \mathcal{L}}\sum_{k \in \mathcal{K}}\sum_{s = 1}^{M - 1}\frac{C^s}{L_k} \sum_{m = s + 1}^{M} \frac{2^{2m + 1}L_{\min}}{8^{m - 1 - s}\lambda K 2^{2(s - 1)}} \nonumber                               \\
     &\le 2^9\sum_{\ell \in \mathcal{L}}\sum_{k \in \mathcal{K}}\sum_{s = 1}^{M - 1}\frac{C^s}{K} \sum_{m = s+1}^{M} \frac{4^{m - 1 - s}}{8^{m - 1 - s}}\nonumber                                                \\
     & \le 2^9\sum_{\ell \in \mathcal{L}}\sum_{k \in \mathcal{K}}\sum_{s = 1}^{M - 1}\frac{C^s}{K} \le 2^9L\sum_{s = 1}^{M - 1}C^s\label{eq:case3_B},
\end{align}
where (a) is because \(T^m = \lambda \frac{K}{L_{\min}} 2^{2(m - 1)}\), and changing the order of summations.

On the other hand if \(k \in \mathcal{A}^m_\ell\), we have
\begin{align*}
    \Reg_{k, \ell}^m \le \Delta_{k, \ell}p_{k, \ell}^mT^m \le 32\rho^{m - 1}\cdot\frac{1}{|\mathcal{A}_\ell^m|}\cdot\lambda \frac{K}{L_{\min}}2^{2(m -1)}
\end{align*}

Therefore following the same analysis leading to \eqref{eq:case3_B}, we have
\begin{align}
    \Reg_{\mathcal{A}, 3}
     & \le 2^3K\sum_{m = 1}^M\sum_{\ell \in \mathcal{L}}\sum_{k \in \mathcal{A}^m_\ell}\frac{2^{2m}}{|\mathcal{A}^m_\ell|}\rho^{m-1} \nonumber \\
     & \overset{\eqref{eq:rho_MA}}= 2^3K\sum_{\ell \in \mathcal{L}}\sum_{m = 1}^M \sum_{s = 1}^{m-1}\frac{2^{2m+1}C^{s}}{8^{(m - 1) - s - 1}T^s} \nonumber\\
     &= 2^3KL\sum_{s = 1}^{M - 1} C^s \sum_{m = s + 1}^{M} \frac{2^{2m}L_{\min}}{8^{m - 1 - s} K 2^{2(s - 1)}} \nonumber                                           \\
     & \overset{(a)}= L\sum_{s = 1}^{M - 1} C^s \sum_{m = s+1}^{M} \frac{4^{m - 5 - s}}{8^{m - 1 - s}} \le L\sum_{s = 1}^{M - 1} C^s\label{eq:case3_A},
\end{align}

Hence, the total aggregated regret can be bounded as follows, with probability at least \(1 - \delta\).
\begin{align}
    &\Reg_{\mathcal{A}, 1} + \Reg_{\mathcal{B}, 1} + \Reg_{\mathcal{B}, 2} + \Reg_{\mathcal{A}, 3} + \Reg_{\mathcal{B}, 3} \le O\left(LC
    + \log T \log(\frac{8KL\log T}{\delta})\frac{K}{\Delta_{\min}}\right).
\end{align}

\end{proof}

\end{document}